\newlength{\dhatheight}
\renewcommand{\d}{{\rm d}} % short-hand for dimension, useful in proofs
\newcommand{\y}{\mathbf{y}} % vector of y values
\newcommand{\PXY}{\mathcal{P}_{XY}}
\newcommand{\Risk}{{\rm R}}
\newcommand{\RE}{{\rm RE}} % set of distributions satisfying the Realizable case
\newcommand{\AG}{{\rm AG}} % set of distributions satisfying a noise bound in the agnostic case
\newcommand{\D}{\mathcal{D}} % loss diameter ; might alternatively consider the notation \sigma for this?
\newcommand{\metric}{\rho}
\newcommand{\dist}{\metric}
\newcommand{\ERM}{{\rm ERM}}
\newcommand{\SC}{\mathcal{M}} % sample complexity of passive learning
\newcommand{\vc}{{\rm vc}}
\renewcommand{\dim}{{\rm dim}} % gamma-robust shattering dimension
\newcommand{\G}{{\cal{G}}}
\newcommand{\polylog}{{\rm polylog}}
\newcommand{\eps}{\varepsilon}
\newcommand{\X}{\mathcal X} % instance space
\newcommand{\Y}{\mathcal Y} % label space
\newcommand{\alg}{\mathbb{A}} % the proposed learning algorithm
\renewcommand{\H}{\mathcal H} % hypothesis space
\renewcommand{\L}{\mathcal L} % labeled set
\newcommand{\U}{\mathcal U} % unlabeled set
\newcommand{\Z}{\mathcal{Z}}
\newcommand{\er}{{\rm er}} % error rate
\DeclareSymbolFont{bbold}{U}{bbold}{m}{n}
\DeclareSymbolFontAlphabet{\mathbbold}{bbold}
\newcommand{\ind}{\mathbbold{1}}
\newcommand{\A}{\mathcal{A}}
\renewcommand{\S}{\mathcal S} % generic collection of sets
\renewcommand{\P}{\mathbb P} % also probability
\newcommand{\nats}{\mathbb{N}} % natural numbers
\newcommand{\reals}{\mathbb{R}} % real numbers
\newcommand{\E}{\mathbb{E}}
\newcommand{\argmin}{\mathop{\rm argmin}}
\newcommand{\supp}{{\rm supp}} % the support of a probability measure
\newcommand{\Mre}{\SC_{{\rm RE}}}
\newcommand{\ignore}[1]{}
\newcommand{\todo}[1]{}
\newcommand{\oldstuff}[1]{}
\newsavebox{\savepar}
\newcommand{\vast}{\bBigg@{3}}
\newcommand{\Vast}{\bBigg@{4}}
\newcommand{\norm}[1]{\lVert#1\rVert}
\newcommand{\RERM}{{\rm RERM}}
\renewcommand{\epsilon}{\eps}
\newcommand{\natinote}[1]{\textrm{\textcolor{red}{[[ {#1} -Nati ]]}}}
\newcommand{\omarnote}[1]{\textrm{\textcolor{red}{[[ {#1} -Omar ]]}}}
\newcommand{\removed}[1]{}
\newcommand{\abs}[1]{\left\lvert{#1}\right\rvert}
\title[Adversarially Robust Learnability]{VC Classes are Adversarially Robustly Learnable, \\but Only Improperly}
\begin{document}

\maketitle

\begin{abstract}%
We study the question of learning an adversarially robust predictor. We show that any hypothesis class $\H$ with finite VC dimension is robustly PAC learnable with an \emph{improper} learning rule. The requirement of being improper is necessary as we exhibit examples of hypothesis classes $\H$ with finite VC dimension that are \emph{not} robustly PAC learnable with any \emph{proper} learning rule.
\end{abstract}

\begin{keywords}%
adversarial robustness, PAC learning, sample complexity, improper learning.
\end{keywords}

\section{Introduction}
\label{sec:introduction}

Learning predictors that are robust to adversarial perturbations is an important challenge in contemporary machine learning. There has been a lot of interest lately in how predictors learned by deep learning are {\em not} robust to adversarial examples \citep{szegedy2013intriguing,biggio2013evasion,goodfellow2014explaining}, and there is an ongoing effort to devise methods for learning predictors that {\em are} adversarially robust. \removed{
For example, neural networks have been shown to be vulnerable to adversarial examples in many domains; including image classification \citep{goodfellow2014explaining}, question answering \citep{jia2017adversarial}, and speech recognition \citep{carlini2018audio}. 
As machine learning systems become increasingly integrated into society, it becomes critical to ensure that these systems {\em are} are adversarially robust. 
 }
In this paper, we consider the problem of learning, based on a (non-adversarial) i.i.d.~sample, a predictor that is robust to adversarial examples at test time. We emphasize that this is distinct from the learning process itself being robust to an adversarial training set.

Given an instance space $\X$ and label space $\Y=\{+1,-1\}$,\removed{ and hypothesis class $\H \subseteq \Y^\X$.} we formalize an adversary we would like to protect against as $\U:\X \mapsto 2^\X$, where $\U(x)\subseteq \X$ represents the set of perturbations (adversarial examples) that can be chosen by the adversary at test time. For example, $\U$ could be perturbations of distance at most $\gamma$ w.r.t. some metric $\dist$, such as the $\ell_\infty$ metric considered in many applications: $\U(x)=\{z\in \X: \norm{x-z}_\infty \leq \gamma\}$.
Our only (implicit) restriction on the specification of $\U$ is that $\U(x)$ should be nonempty for every $x$.  
For a distribution $\D$ over $\X\times \Y$, we observe $m$ i.i.d.\ samples $S\sim \D^m$, and our goal is to learn a predictor $\hat{h}: \X \mapsto \Y$ 
%that achieves 
having small robust risk\removed{(compared with the best robust predictor in $\H$)}, 
%\begin{equation*}
\begin{center}
{\vskip -1mm}$\Risk_{\U}(\hat{h};\D) := \E_{(x,y) \sim \D}\!\left[ \sup\limits_{z\in\U(x)} \ind[\hat{h}(z)\neq y] \right] \removed{\leq \inf_{h \in \H} \Risk_{\U}(h;\D) + \epsilon}$.
\end{center}
%\end{equation*}

{\vskip -1mm}The common approach to adversarially robust learning is to pick a hypothesis class $\H\subseteq \Y^\X$ (e.g.\ neural networks) and learn through
robust {\em empirical} risk minimization:
%\begin{equation*}
\begin{center}
{\vskip -2mm}$\hat{h} \in \RERM_\H(S) := \argmin\limits_{h\in \H} \hat{\Risk}_{\U}(h;S)$
\end{center}
%\end{equation*}
%
{\vskip -2mm}where $\hat{\Risk}_{\U}(h;S) = \frac{1}{m} \sum_{(x,y)\in S} \sup_{z\in\U(x)} \ind[h(z)\neq y]$. Most work on the problem has focused on computational approaches to solve this empirical
optimization problem, or related problems of minimizing a robust
version of some surrogate loss instead of the 0/1 loss \citep{madry2017towards,wong2018provable,raghunathan2018certified,raghunathan2018semidefinite}. \removed{ as well as empirical work using such approaches (\omarnote{what should be cited?}]\natinote{Aren't some of the paper, even by Madri, using this approach?}).} But of course our true objective is not the empirical robust risk $\hat{\Risk}_{\U}(h;S)$, but rather the population robust risk $\Risk_{\U}(h;\D)$.

How can we ensure that $\Risk_{\U}(h;\D)$ is small?  All prior approaches that we are aware of for ensuring adversarially robust generalization are based on uniform convergence, i.e.\ showing that w.h.p.\ for all predictors $h \in \H$, the estimation error $\lvert\Risk_{\U}(h;\D) - \hat{\Risk}_{\U}(h;S)\rvert$ is small, perhaps for some surrogate loss \citep{bubeck2018adversarial,cullina2018pac,khim2018adversarial,yin2018rademacher}.  Such approaches justify $\RERM$, and in particular yield M-estimation type \emph{proper} learning rules: we are learning a hypothesis class by choosing a predictor in the class that minimizes some empirical functional.  For standard supervised learning we know that proper learning, and specifically $\ERM$, is sufficient for learning, and so it is sensible to limit attention to such methods.

But it has also been observed in practice that the adversarial error
does not generalize as well as the standard error, i.e.\ there can be a large gap between $\Risk_{\U}(h;\D)$ and $\hat{\Risk}_{\U}(h;S)$ even when their non-robust versions are similar \citep{schmidt2018adversarially}.  This suggests that perhaps the robust risk does not concentrate as well as the standard risk, and so RERM in adversarially robust learning might not work as well as ERM in standard supervised learning. Does this mean that such problems are not adversarially robustly learnable?  Or is it 
perhaps 
that proper 
learners might not be sufficient? % for learning?

In this paper we aim to characterize which hypothesis classes are
adversarially robustly learnable, and using what learning rules. That is, for a
given hypothesis class $\H \subseteq \Y^\X$ and adversary $\U$, we ask whether it is possible,
based on an i.i.d.\ sample to learn a predictor $h$ that has population robust risk almost as good as any predictor in $\H$ (see Definition \ref{def:ag_sample_complexity} in Section \ref{sec:notation}). We discover a stark contrast between \emph{proper} learning rules which output predictors in $\H$, and \emph{improper} learning rules which are not constrained to predictors in $\H$. Our main results are:
\begin{itemize}
    \item We show that there exists an adversary $\U$ and a hypothesis class $\H$ with finite VC dimension that \emph{cannot} be robustly PAC learned with any \emph{proper} learning rule (including $\RERM$).
    \item We show that for any adversary $\U$ and any hypothesis class $\H$ with finite VC dimension, there exists an \emph{improper} learning rule that can robustly PAC learn $\H$ (although with sample complexity that is sometimes exponential in the VC dimension).
\end{itemize}

Our results suggest that we should start considering \emph{improper} learning rules to ensure adversarially robust generalization. They also demonstrate that previous approaches to adversarially robust generalization are not always sufficient, as all prior work we are aware of is based on uniform convergence of the robust risk, either directly for the loss of interest \citep{bubeck2018adversarial,cullina2018pac} or some carefully constructed surrogate loss \citep{khim2018adversarial,yin2018rademacher}, which would still justify the use of M-estimation type proper learning.  The approach of \citet{attias2018improved} for the case where $|\U(x)|\leq k$ (i.e.\ finite number of perturbations) is most similar to ours, as it uses an improper learning rule, but their analysis is still based on uniform convergence and so would apply also to $\RERM$ (the improperness is introduced only for computational, not statistical, reasons).  Also, in this specific case, our approach would give an improved sample complexity that scales only roughly logarithmically with $k$, as opposed to the roughly linear scaling in \citet{attias2018improved}---see discussion at the end of Section \ref{sec:vc-dim} for details.

A related negative result was presented by \citet{schmidt2018adversarially}, where they showed that there exists a family of distributions (namely, mixtures of two $d$-dimensional spherical Gaussians) where the sample complexity for standard learning is $O(1)$, but the sample complexity for adversarially robust learning is at least $\Omega(\frac{\sqrt{d}}{\log d})$. This an interesting instance where there is a large separation in sample complexity between standard learning and robust learning.  But distribution-specific learning is known to be less easily characterizable, with the uniform convergence not being necessary for learning, and ERM not always being optimal, even for standard (non-robust) supervised learning.  In this paper we focus on ``worst case'' distribution-free robust learning, as in standard PAC learnability.

A different notion of robust learning was studied by \citet{xu2012robustness}.
They use empirical robustness as a design technique for learning rules, but
their goal, and the guarantees they establish are on the standard
non-robust population risk, and so do not inform us about robust
learnability.

\section{Problem Setup}
\label{sec:notation}
We are interested in studying the sample complexity of adversarially robust PAC learning in the realizable and agnostic settings. Given a hypothesis class $\H \subseteq \Y^\X$, our goal is to design a learning rule $\A:(\X\times \Y)^* \mapsto \Y^\X$ such that for any distribution $\D$ over $\X\times \Y$, the rule $\A$ will find a predictor that competes with the best predictor $h^*\in \H$ in terms of the robust risk using a number of samples that is independent of the distribution $\D$.
The following definitions formalize the notion of robust PAC learning in the realizable and agnostic settings:\footnote{We implicitly suppose 
that the hypotheses $h$ in $\H$ and their losses $\sup_{z \in \U(x)} \ind[h(z)\neq y]$ are measurable, 
and that standard mild restrictions on $\H$ are imposed to guarantee measurability of 
empirical processes, so that the standard tools of VC theory apply.  See \citet*{blumer:89,van-der-Vaart:96} for discussion of such measurability 
issues, which we will not mention again in the remainder of this article.}

\begin{definition}[Agnostic Robust PAC Learnability]
\label{def:ag_sample_complexity}
For any $\epsilon,\delta \in (0,1)$, the \emph{sample complexity of agnostic robust $(\epsilon, \delta)-$PAC learning of $\H$ with respect to adversary $\U$}, denoted $\SC_{\AG}(\eps,\delta;\H,\U)$, is defined as the smallest $m \in \nats \cup \{0\}$ for which there exists a learning rule $\A: (\X\times \Y)^* \mapsto \Y^\X$ such that, for every data distribution $\D$ over $\X \times \Y$, with probability at least $1-\delta$ over $S \sim \D^m$,
%\begin{equation*}
\begin{center}
{\vskip -1mm}$\Risk_{\U}(\A(S);\D)\leq \inf\limits_{h \in \H} \Risk_{\U}(h;\D) + \epsilon$.
\end{center}
%\end{equation*}
{\vskip -2mm}If no such $m$ exists, define $\SC_{\AG}(\eps,\delta;\H,\U) = \infty$. We say that $\H$ is robustly PAC learnable in the agnostic setting with respect to adversary $\U$ if $\forall \epsilon,\delta \in (0,1)$, %the sample complexity
$\SC_{\AG}(\eps,\delta;\H,\U)$ is finite.
\end{definition}

\begin{definition}[Realizable Robust PAC Learnability]
\label{def:re_sample_complexity}
For any $\epsilon,\delta \in (0,1)$, the \emph{sample complexity of realizable robust $(\epsilon, \delta)$-PAC learning of $\H$ with respect to adversary $\U$}, denoted $\SC_{\RE}(\eps,\delta;\H,\U)$, is defined as the smallest $m \in \nats \cup \{0\}$ for which there exists a learning rule $\A: (\X\times \Y)^* \mapsto \Y^\X$ such that, for every data distribution $\D$ over $\X \times \Y$ where there exists a predictor $h^*\in \H$ with zero robust risk, $\Risk_{\U}(h^*;\D) = 0$, with probability at least $1-\delta$ over $S \sim \D^m$,
%\begin{equation*}
\begin{center}
{\vskip -1mm}$\Risk_{\U}(\A(S);\D) \leq \epsilon$.
\end{center}
%\end{equation*}
{\vskip -1mm}If no such $m$ exists, define $\SC_{\RE}(\eps,\delta;\H,\U)) = \infty$. We say that $\H$ is robustly PAC learnable in the realizable setting with respect to adversary $\U$ if $\forall \epsilon,\delta \in (0,1)$, %the sample complexity
$\SC_{\RE}(\eps,\delta;\H,\U)$ is finite.
\end{definition}

\begin{definition}[Proper Learnability] We say that $\H$ is \emph{properly} robustly PAC learnable (in the agnostic or realizable setting) if it can be learned as in Definitions \ref{def:ag_sample_complexity} or \ref{def:re_sample_complexity} using a learning rule $\A: (\X\times \Y)^* \mapsto \H$ that always outputs a predictor in $\H$.  We refer to learning using any learning rule $\A:(\X\times \Y)^* \mapsto \Y^\X$, as in the definitions above, as {\em improper} learning.    
\end{definition}

We also denote $\er(h;\D) = \P( h(x) \neq y )$, the (non-robust) error rate under the $0$-$1$ loss, and $\hat{\er}(h;S) = \frac{1}{|S|} \sum_{(x,y) \in S} \ind[ h(x) \neq y ]$ the empirical error rate.  These agree with the robust variant when $\U(x)=\{ x \}$, and so robust learnability agrees with standard supervised learning when $\U(x)=\{ x\}$.  For more powerful adversaries, robust learnability is a special case of Vapink's ``General Learning'' \citep{vapnik:82}, but can not, in general, be phrased in terms of supervised learning of some modified hypothesis class or loss.  We recall the Vapnik-Chervonenkis dimension (VC dimension) is defined as follows,

\begin{definition}[VC dimension]
\label{VCdim}
We say that a sequence $\{x_1,\dots,x_k\}\in\X$ is shattered by $\H$ if $\forall y_1,\dots,y_k\in \Y, \exists h\in \H$ such that $\forall i\in[k], h(x_i)=y_i$. The VC dimension of $\H$ (denoted $\vc(\H)$) is then defined as the largest integer $k$ for which there exists $\{x_1,\dots,x_k\}\in\X$ that is shattered by $\H$. If no such $k$ exists, then $\vc(\H)$ is said to be infinite.
\end{definition}

In the standard PAC learning framework, we know that a hypothesis class $\H$ is PAC learnable if and only if the VC dimension of $\H$ is finite \citep{vapnik:71,vapnik:74,blumer:89,ehrenfeucht:89}. In particular, $\H$ is properly PAC learnable with $\ERM_{\H}$ and therefore proper learning is sufficient for supervised learning. A natural question to ask, based on the definition of robust PAC learning, is what is a necessary and sufficient condition on $\H$ that implies that it is robustly PAC learnable with respect to adversary $\U$. We can easily obtain a sufficient condition based on Vapink's ``General Learning'' \citep{vapnik:82}. Denote by $\L^{\U}_\H$ the robust loss class of $\H$,
%\begin{equation*}
\begin{center}
{\vskip -2mm}$\L^{\U}_\H = \left\{(x,y)\mapsto \sup\limits_{z\in\U(x)} \ind[h(z)\neq y] : h\in \H \right\}$.
\end{center}
%\end{equation*}

{\vskip -1mm}If the robust loss class $\L_{\H}^{\U}$ has finite VC dimension ($\vc(\L_{\H}^{\U})<\infty$), then $\H$ is robustly PAC learnable with $\RERM_\H$ and sample complexity that scales linearly with $\vc(\L_{\H}^{\U})$.  One might then wish to relate the VC dimension of the hypothesis class ($\vc(\H)$) to the VC dimension of the robust loss class ($\vc(\L_\H^\U)$).  But as we show in Sections \ref{sec:proper} and \ref{sec:adjacent}, there can be arbitrarily large gaps between them.

As mentioned earlier, for supervised learning finite VC dimension of the loss class (which is equal to the VC dimension of the hypothesis class) is also necessary for learning.  For general learning, unlike supervised learning, the loss class having finite VC dimension, and uniform convergence over this class, is not, in general, necessary, and rules other than $\ERM$ might be needed for learning \citep[e.g.][]{vapnik:82,shalev2009stochastic,daniely:15}. In the following Sections, we show that this is also the case for robust learning. We show that $\vc(\L_{\H}^{\U})$ can be arbitrarily larger, we might not have uniform convergence, $\RERM$ might not ensure learning, while the problem is still learnable with a different (improper, in our case) learning rule.

% is the title of this section ok?
\section{Sometimes There are no Proper Robust Learners}
\label{sec:proper}

We start by showing that even for hypothesis classes with finite VC dimension, indeed even if $\vc(\H)=1$, robust PAC learning might not be possible using {\em any} proper learning rule.  In particular, even if there is a robust predictor in $\H$, and even with an unbounded number of samples, $\RERM$ (or any other M-estimator or other proper learning rules), will not ensure a low robust risk.

\begin{theorem}
\label{thm:pac_properfail}
There exists a hypothesis class $\H \subseteq \Y^\X$ with $\vc(\H)\leq 1$ and an adversary $\U$ such that $\H$ is not properly robustly PAC learnable with respect to $\U$ in the realizable setting. 
\end{theorem}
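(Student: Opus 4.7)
My plan is to exhibit an explicit pair $(\H, \U)$ witnessing the theorem. I would take $\X$ to be an infinite instance space and let $\H = \{h_\alpha : \alpha \in I\}$, where each $h_\alpha$ is essentially the indicator of a single ``anchor'' point. This structural choice ensures $\vc(\H) \leq 1$: for any two distinct points, no single $h_\alpha$ can simultaneously label both as $+1$. The adversary $\U$ is then engineered so that $\U(x)$ links each $x$ to a carefully chosen set of other inputs, making the robust loss at $x$ effectively probe the hypothesis's value across a whole neighborhood. This simultaneously inflates $\vc(\L^{\U}_{\H})$ well beyond $\vc(\H)$ and -- more importantly -- destroys the sample-based identifiability of the target among realizable distributions.

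The argument then proceeds in three steps. \emph{Step 1:} verify $\vc(\H) \leq 1$ by inspecting the four possible labelings of an arbitrary pair and showing at least one is unachievable. \emph{Step 2:} construct an infinite family $\{\D_\alpha\}_{\alpha \in I'}$ of realizable distributions, each realized by the corresponding $h_\alpha$ with $\Risk_\U(h_\alpha;\D_\alpha)=0$, and designed so that finite samples from different $\D_\alpha$'s are statistically near-indistinguishable. \emph{Step 3 (the core):} fix any proper learner $\A$ and any sample size $m$; choose a finite subfamily $\{\D_{\alpha_1}, \dots, \D_{\alpha_N}\}$ with $N \gg m$ and apply a no-free-lunch averaging argument. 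The key leverage is that $\A(S)$ is always a single $h_\beta \in \H$, and by the anchor structure each such $h_\beta$ can have zero robust risk on at most one (or a small constant fraction) of the $\D_{\alpha_i}$. Hence the average expected robust risk $\E_{i \in [N]} \E_{S \sim \D_{\alpha_i}^m}[\Risk_\U(\A(S); \D_{\alpha_i})]$ is bounded below by a positive constant $c$; by pigeonhole some $\D_{\alpha_{i^\star}}$ forces $\A$ to suffer expected risk $\geq c$ regardless of $m$, and Markov's inequality then gives constant-probability failure, yielding $\SC_{\RE}(\eps,\delta;\H,\U) = \infty$ for sufficiently small $\eps,\delta$.

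The main obstacle is the construction itself. A naive singletons class with a benign $\U$ is easily learnable -- the learner can simply output a hypothesis whose anchor avoids the empirical support -- so the adversary $\U$ must be crafted to defeat every such ``avoidance'' strategy. Concretely, for any reasonable anchor $\beta$ that a learner might output based on the sample, some realizable $\D_\alpha$ in the family must place nontrivial mass on the ``sensitive zone'' of $h_\beta$ (the set of inputs where the adversary can force $h_\beta$ to err on the true label). Arranging this combinatorial property while preserving $\vc(\H) \leq 1$ is the crux. The resulting proper/improper gap is powered by the fact that an improper learner could safely predict $-1$ (or an analogous hedged rule) outside the empirical support, a predictor not in $\H$, while a proper learner is forced to commit to a single anchor -- and this commitment is exactly what the adversary's family of distributions is designed to exploit.
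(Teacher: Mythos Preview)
Your high-level strategy---a family of realizable distributions on which any proper output fails on average, then Markov---is correct and matches the paper's. But two concrete pieces are missing.

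First, the inference in Step~3 is a non sequitur as written: from ``each $h_\beta$ has zero robust risk on at most one of the $\D_{\alpha_i}$'' you cannot conclude the average expected risk is at least a constant $c$, since the nonzero risks could each be of order $1/N$ and the average would then vanish. What you actually need is that every $h_\beta \in \H$ has robust risk bounded below by a fixed constant on a constant fraction of the $\D_{\alpha_i}$---equivalently, that the ``sensitive zone'' of every $h_\beta$ occupies a constant fraction of the support of most $\D_{\alpha_i}$. The paper enforces this by taking, at each scale $m$, $3m$ base points with disjoint perturbation balls and restricting to hypotheses $h_b$ (indexed by $b \in \{0,1\}^{3m}$ with exactly $m$ ones) that are robustly incorrect on precisely the $m$ base points where $b_i = 1$; the hard distributions are uniform on $2m$-point subsets, and the averaging argument then yields constant expected risk. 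Your singleton idea can encode the same combinatorics by indexing anchors by size-$m$ subsets $S$ and declaring $p_S \in \U(x_i)$ iff $i \in S$, but at that point it is the paper's bit-string family with a non-metric $\U$ substituted for the metric balls (the paper keeps $\U$ a metric ball and instead plants a separate witness point inside each ball, which is why its hypotheses are $-1$ on finite sets rather than singletons).

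Second, a single $\H$ must defeat every sample size $m$, and your plan does not block the escape where the learner outputs $h_\beta$ with anchor outside the current distribution's reach, achieving zero robust risk regardless of which $\D_\alpha$ generated the sample. The paper handles this by building disjoint blocks $\H_m$ on disjoint base sets $X_m$ and modifying each $h_b \in \H_m$ to also label $-1$ on every point of $\bigcup_{m' \neq m} X_{m'}$, so that a hypothesis from the wrong block is automatically nonrobust on the current distribution. This modification destroys any singleton structure (each $h_b$ is now $-1$ on an infinite set) and turns the $\vc(\H) \leq 1$ verification into a multi-case analysis across blocks; your framework would need an analogous cross-scale mechanism.
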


This result implies that finite VC dimension of a hypothesis class $\H$ is not sufficient for robust PAC learning if we want to use \emph{proper} learning rules. For the proofs in this section, we will fix an instance space $\X=\reals^d$ equipped with a metric $\dist$, and an adversary $\U: \X \mapsto 2^\X$ such that $\U(x)=\{z\in \X: \dist{(x,z)} \leq \gamma\}$ for all $x\in\X$ for some $\gamma > 0$. First, we prove a lemma that shows that there exists a hypothesis class $\H$ where there is an arbitrarily large gap between the VC dimension of $\H$ and the VC dimension of the robust loss class of $\H$,

\begin{lemma}
\label{lemma:vcblowup}
Let $m\in \nats$. Then, there exists $\H \subseteq \Y^\X$ such that $\vc{(\H)}\leq 1$ but $\vc{(\L^{\U}_{\H})}\geq m$. 
\end{lemma}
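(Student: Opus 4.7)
The plan is to use a ``one-negative-point'' hypothesis class, whose individual members label only a single point of $\X$ as $-1$, so that $\vc(\H)\le 1$, while the robust loss class becomes, in disguise, the class of indicators of $\gamma$-balls, which is rich enough to shatter arbitrarily many points provided the ambient dimension grows with $m$. Concretely, I would take $\X = \reals^m$ with the $\ell_\infty$ metric (scaling the ambient dimension with $m$), so that $\U(x) = \{z : \norm{z-x}_\infty \le \gamma\}$, and define
\[
\H \;=\; \{h_c : c \in \X\}, \qquad h_c(z) \;=\; -1 \text{ if } z=c, \qquad h_c(z) \;=\; +1 \text{ otherwise.}
\]

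I would first verify $\vc(\H) \le 1$: any $h_c$ takes value $-1$ on at most one point of $\X$, so no set of two distinct points can simultaneously be labeled $-1$ by any member of $\H$, ruling out shattering of any pair.

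Next I would exhibit a set of size $m$ shattered by $\L_\H^\U$. Take $x_i = 2\gamma e_i$ and $y_i = +1$ for $i=1,\dots,m$. For any $h_c \in \H$ the robust loss is
\[
\sup_{z \in \U(x_i)} \ind[h_c(z) \neq +1] \;=\; \ind[c \in \U(x_i)] \;=\; \ind\bigl[\norm{x_i - c}_\infty \le \gamma\bigr],
\]
because $h_c$ takes value $-1$ at the single point $z=c$. Given $\sigma \in \{0,1\}^m$, I would set $c_\sigma = \gamma \sum_{i : \sigma(i)=1} e_i$ and do a direct coordinate check: when $\sigma(i)=1$, the $i$-th coordinate of $x_i - c_\sigma$ contributes $|2\gamma - \gamma| = \gamma$ and every other coordinate contributes at most $\gamma$, so $\norm{x_i - c_\sigma}_\infty = \gamma$; when $\sigma(i)=0$, the $i$-th coordinate of $x_i - c_\sigma$ contributes $|2\gamma - 0| = 2\gamma$, so $\norm{x_i - c_\sigma}_\infty \ge 2\gamma > \gamma$. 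Hence $h_{c_\sigma}$ realizes the labeling $\sigma$ on $\{(x_i,+1)\}_{i=1}^m$, giving $\vc(\L_\H^\U) \ge m$.

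The main obstacle is engineering a family of centers that simultaneously achieves every one of the $2^m$ subsets using balls of a single fixed radius $\gamma$. The $\ell_\infty$ geometry together with the axis-aligned sample points makes this painless, since changes to different coordinates of $c$ decouple, and each coordinate independently controls whether one particular $x_i$ is covered. Conceptually the lemma holds because $\vc(\H)$ only ``sees'' the isolated negative point of each $h_c$, while the robust loss class on positive examples is essentially the class of $\gamma$-balls, whose VC dimension grows without bound as the dimension of the ambient space grows.
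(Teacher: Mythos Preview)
Your proof is correct. Both approaches exploit the same core phenomenon---hypotheses that are $-1$ on a tiny set and $+1$ elsewhere, so that the robust loss on positive examples reduces to ``does $\U(x_i)$ contain a negative point?''---but the executions differ. You take the \emph{singleton} class $\{h_c : c\in\X\}$, which makes $\vc(\H)\le 1$ immediate (no $h_c$ can label two points $-1$), and then use the $\ell_\infty$ geometry in $\reals^m$ so that a single center $c_\sigma$ can simultaneously land in or out of each $\U(x_i)$ as dictated by $\sigma$. The paper instead builds a \emph{finite} class of size $2^m$, where each $h_b$ is $-1$ on a set $Z_b$ of up to $m$ points; the $Z_b$'s are chosen pairwise disjoint, and the $\vc(\H)\le 1$ argument goes through a case analysis based on this disjointness. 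The shattering of $\{(x_i,+1)\}$ is then purely combinatorial, requiring only that the $\U(x_i)$ be disjoint and contain enough distinct points.

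What each buys: your argument is cleaner for this lemma in isolation---the $\vc$ bound is one line and the shattering calculation is transparent---but it requires scaling the ambient dimension with $m$ and committing to the $\ell_\infty$ metric. The paper's construction works in any fixed $\reals^d$ with any metric (so long as one can find $m$ points with disjoint $\gamma$-balls), and, more importantly for the paper, it yields a \emph{finite} class of size $2^m$ living on a controlled finite set of points. This finiteness and the disjoint-signature structure are reused in the subsequent Lemma~\ref{lemma:properfail_realizable} and Theorem~\ref{thm:pac_properfail}, where one needs to stack the constructions for all $m$ simultaneously on a single instance space; your infinite singleton class over $\reals^m$ would not plug into that argument without modification.
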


{\vskip -1mm}\begin{proof}
Pick $m$ points $x_1, \dots, x_m$ in $\X$ such that for all $i,j\in[m], \U(x_i)\cap\U(x_j)=\emptyset$. In other words, we want the perturbation sets $\U(x_1),\dots,\U(x_m)$ to be mutually disjoint.

We will construct a hypothesis class $\H$ in the following iterative manner. Initialize set $\Z=\{x_1,\dots,x_m\}$. For each bit string $b\in \{0,1\}^m$, initialize $Z_b=\emptyset$. For each $i\in[m]$, if $b_i=1$ then pick a point $z \in \U(x_i)\setminus \Z$ and add it to $Z_b$, i.e. $Z_b=Z_b\cup\{z\}$. Once we finish picking points based on all bits that are set to $1$, we add $Z_b$ to $\Z$ (i.e. $\Z=\Z \cup Z_b$). We define $h_b:\X \rightarrow \Y$ as:
\begin{center}
    {\vskip -2mm}$h_b(x) = \left\{
        \begin{array}{ll}
            +1 & \text{if } x \notin Z_b \\
            -1 & \text{if } x \in Z_b
        \end{array}
    \right. $
\end{center}
{\vskip -2mm}Then, let $\H=\{h_b: b\in \{0,1\}^m\}$. We can think of each mapping $h_b$ as being characterized by a unique signature $Z_b$ that indicates the points that it labels with $-1$. These points are carefully picked such that, first, they are inside the perturbation sets of $x_1,\dots,x_m$; and second, no two mappings label the same point with $-1$, i.e. for any $b,b'\in \{0,1\}^m$, where $b \neq b'$, $Z_b \cap Z_b' = \emptyset$. Also, we make sure that all mappings in $\H$ label the set $\{x_1,\dots,x_m\}$ with $+1$.

Next, we proceed with proving two claims about $\H$. First, that $\vc(\H)\leq1$. Pick any two points $z_1,z_2\in\X$. Consider the following cases. In case $z_1$ or $z_2$ is in $\X \setminus \Z$. Suppose W.L.O.G that $z_2 \in \X \setminus \Z$. Then we know that all mappings label $z_2$ in the same way with label $+1$, because for all $b\in \{0,1\}^m, z_2 \notin Z_b$. Therefore, we cannot shatter $z_1,z_2$ with $\H$. In case $z_1$ and $z_2$ are both in $\Z$. Since by our construction, $\Z=\cup_{b\in\{0,1\}^m} Z_b$ and $Z_b \cap Z_b' = \emptyset$ for any $b\neq b'$, we have two sub-cases. Either $z_1,z_2\in Z_b$ for some $b\in \{0,1\}^m$, which means that the only labelings we can obtain are $(-1,-1)$ with $h_b$, and $(+1,+1)$ with $h_b'$ for any $b'\neq b$. Second case is that $z_1 \in Z_b$ and $z_2 \in Z_{b'}$ for $b\neq b', b,b'\in \{0,1\}^m$. By our construction, we know that we cannot label both points $z_1$ and $z_2$ with $(-1,-1)$, because they don't belong to the same set. Therefore, in both subcases, we cannot shatter $z_1,z_2$ with $\H$. This concludes that $\vc{(\H)}\leq 1$.

Second, we will show that $\vc(\L^{\U}_\H)\geq m$. Consider the set $S=\{(x_1,+),\dots,(x_m,+)\}$. We will show that $\L^{\U}_\H$ shatters $S$. Pick any labeling $y\in \{0,1\}^m$. Note that by construction of $\H$, $\exists h_b \in \H$ such that $b=y$. Then, for each $i\in[m]$, $\sup_{z \in \U(x_i)} \ind[ h_b(z) \neq +1 ] = b_i = y_i$. This shows that $\L^{\U}_\H$ shatters $S$, and therefore $\vc{(\L^{\U}_\H)}\geq m$.
\end{proof}

The following lemma (proof provided in Appendix \ref{appendix-proper}) establishes that for any sample size $m\in\nats$, there exists a hypothesis class $\H$ with $\vc{(\H)}\leq 1$ such that any \emph{proper} learning rule will fail in learning a robust classifier if it observes at most $m$ samples but not more. 

\begin{lemma}
\label{lemma:properfail_realizable}
Let $m\in \nats$. Then, there exists $\H \subseteq \Y^\X$ with $\vc{(\H)}\leq 1$ such that for any proper learning rule $\A: (\X \times \Y)^* \mapsto \H$,
\begin{itemize}
    \item $\exists$ a distribution $\D$ over $\X \times \Y$ and a predictor $h^*\in\H$ where $\Risk_{\U}(h^*;\D) = 0$.
   \item With probability at least $1/7$ over $S\sim \D^m$, $\Risk_{\U}(\A(S);\D) > 1/8$.
\end{itemize}
\end{lemma}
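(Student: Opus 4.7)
The plan is to combine the shattering construction from Lemma~\ref{lemma:vcblowup} with a no-free-lunch averaging argument over a rich family of realizable distributions. First, take $n = 10m$ points $x_1,\ldots,x_n \in \X$ with pairwise disjoint perturbation sets and build the class $\H_0 = \{h_b : b \in \{0,1\}^n\}$ exactly as in Lemma~\ref{lemma:vcblowup}, so that $\vc(\H_0) \leq 1$ and the robust loss of $h_b$ at $(x_i, +1)$ equals $b_i$ for every $i \in [n]$. Because the all-$+1$ hypothesis $h_{0^n}$ trivially realizes every distribution supported on positively-labeled examples, I remove it (and its many low-weight cousins) by restricting to $\H := \{h_b \in \H_0 : |b|_1 = n/2\}$; as a subclass of $\H_0$, this still satisfies $\vc(\H) \leq 1$.

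For each $T \subseteq [n]$ with $|T| = n/2$, let $\D_T$ be uniform on $\{(x_i,+1) : i \in T\}$, and let $b^T \in \{0,1\}^n$ be defined by $b^T_i = \ind[i \notin T]$. Then $|b^T|_1 = n/2$ and $h_{b^T} \in \H$; moreover $Z_{b^T} \cap \U(x_i) = \emptyset$ for every $i \in T$, so $\Risk_\U(h_{b^T}; \D_T) = 0$. Given any proper learner $\A$ and a sample $S$, write $B(S) \subseteq [n]$ for the $-1$-coded indices of $\A(S)$, so $|B(S)| = n/2$ and the robust risk of $\A(S)$ on $\D_T$ is $|B(S) \cap T|/(n/2)$.

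The core estimate is a Bayesian symmetry computation. Draw $T$ uniformly over size-$(n/2)$ subsets of $[n]$ and then $S \sim \D_T^m$; let $T'(S) \subseteq T$ be the set of indices actually appearing in $S$, so $|T'(S)| \leq m$. By symmetry of the prior and the fact that $\D_T$ is uniform on $T$, the posterior of $T$ given $S$ is uniform over size-$(n/2)$ supersets of $T'(S)$, and each unseen $j \notin T'(S)$ satisfies $\Pr[j \in T \mid S] = (n/2 - |T'(S)|)/(n - |T'(S)|)$. Setting $a = |B(S) \cap T'(S)|$, one has $\E[\,|B(S) \cap T| \mid S\,] = a + (n/2 - a)(n/2 - |T'(S)|)/(n - |T'(S)|)$, which is minimized at $a = 0$ and gives conditional expected risk at least $(n/2 - |T'(S)|)/(n - |T'(S)|)$ for any choice of $B(S)$. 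With $n = 10m$ and $|T'(S)| \leq m$, this ratio is at least $(5m - m)/(10m - m) = 4/9$, so $\E_{T,S}[\Risk_\U(\A(S); \D_T)] \geq 4/9$.

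Finally, the robust risk lies in $[0,1]$, so a reverse-Markov step yields $\Pr_{T,S}[\Risk_\U(\A(S); \D_T) > 1/8] \geq 4/9 - 1/8 = 23/72 > 1/7$. Pigeonholing over $T$ produces a specific $T^*$ for which $\Pr_S[\Risk_\U(\A(S); \D_{T^*}) > 1/8] \geq 23/72 > 1/7$, and together with the realizer $h_{b^{T^*}} \in \H$ this witnesses the lemma. The main delicate point is the Bayesian step: one must verify that the learner's information about $T$ from $S$ is captured entirely by $T'(S)$, so that the pool of at least $9m$ unseen indices forces a constant lower bound on expected risk; a proper learner, being forced to commit to a size-$(n/2)$ subset of $-1$-coded indices, is thereby necessarily wrong on a constant fraction of $T$.
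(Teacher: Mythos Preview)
Your proof is correct and follows essentially the same strategy as the paper: restrict the class from Lemma~\ref{lemma:vcblowup} to hypotheses of a fixed Hamming weight, consider uniform distributions over subsets of the base points, lower-bound the average robust risk via a symmetry/averaging argument, and finish with reverse Markov plus pigeonholing over the family of distributions. The only differences are cosmetic---you use $10m$ points with weight $5m$ where the paper uses $3m$ points with weight $m$, and you phrase the averaging step as an explicit Bayesian posterior computation rather than the paper's conditional-expectation framing---but the underlying argument is the same.
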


{\vskip -2mm}We now proceed with the proof of Theorem~\ref{thm:pac_properfail}.

\begin{proof} [of Theorem~\ref{thm:pac_properfail}]
Let $(X_m)_{m\in \nats}$ be an infinite sequence of sets such that each set $X_m$ contains $3m$ distinct points from $\X$, where for any $x_i, x_j \in \cup_{m=1}^{\infty} X_m$ such that $x_i\neq x_j$ we have $\U(x_i)\cap\U(x_j)=\emptyset$. Foreach $m\in\nats$, construct $\H_m$ on $X_m$ as in Lemma \ref{lemma:properfail_realizable}. We want to ensure that predictors in $\H_m$ are non-robust on the points in $X_{m'}$ for all $m'\neq m$, by doing the following adjustment for each $h_b \in \H_m$ (recall from Lemma \ref{lemma:vcblowup} that each predictor has its own unique signature $Z_b$),
\begin{center}
    {\vskip -1mm}$h_b(x) = \left\{
        \begin{array}{ll}
            -1 & \text{if } x \in Z_b \text{ or } x \in X_{m'} \text{ for } m'\neq m \\
            +1 & \text{otherwise }
        \end{array}
    \right.$
\end{center}
{\vskip -1mm}Let $\H=\cup_{m=1}^{\infty} \H_m$. We will show that $\vc(\H)\leq 1$. Pick any two points $z_1,z_2\in\X$. There are six cases to consider. In case both $z_1$ and $z_2$ are in $X_m$ for some $m\in\nats$, then we only obtain the labelings $(+1,+1)$ (by predictors from $\H_m$) and $(-1,-1)$ (by predictors from $\H_{m'}$ with $m'\neq m$). In case both $z_1$ and $z_2$ are in $\U(X_m)\setminus X_m$, then they are not shattered by Lemma \ref{lemma:vcblowup}. In case $z_1\in X_i$ and $z_2 \in X_j$ for $i\neq j$, then we can only obtain the labelings $(+1,-1)$ (by predictors in $\H_i$), $(-1,+1)$ (by predictors in $\H_j$), and $(-1,-1)$ (by predictors in $\H_k$ for $k\neq i,j$). In case $z_1\in X_i$ and $z_2 \in \U(X_j)\setminus X_j$ for $j\neq i$, then we can't obtain the labeling $(+1,-1)$. In case $z_1\in \U(X_i)\setminus X_i$ and $z_2 \in \U(X_j) \setminus X_j$ for $i\neq j$, then we can't obtain the labeling $(-1,-1)$. Finally, if either $z_1$ or $z_2$ is in $\X$ but not in $\cup_{m=1}^{\infty}X_m$ and not in $\cup_{m=1}^{\infty}\U(X_m)$, then all predictors label $z_1$ or $z_2$ with $+1$, and so we can't shatter them. This shows that $\vc(\H)\leq 1$.

By Lemma $\ref{lemma:properfail_realizable}$, it follows that for any proper learning rule $\A:(\X \times \Y)^*\mapsto \H$ and for any $m\in \nats$, we can construct a distribution $\D$ over $X_m \times \Y$ where there exists a predictor $h^*\in\H_m$ with $\Risk_{\U}(h^*;\D) = 0$, but with probability at least $1/7$ over $S\sim \D^m$, $\Risk_{\U}(\A(S);\D) > 1/8$. This works because classifiers from classes $\H_{m'}$ where $m'\neq m$ make mistakes on points in $X_m$ and so they are non-robust. Thus, rule $\A$ will do worse if it picks predictors from these classes. This shows that the sample complexity to properly robustly PAC learn $\H$ is infinite. This concludes that $\H$ is not properly robustly PAC learnable.
\end{proof}

\section{Finite VC Dimension is Sufficient for (Improper) Robust Learnability}
\label{sec:vc-dim}

{\vskip -2mm}In the previous section we saw that finite VC dimension is {\em not} sufficient for {\em proper} robust learnability.  We now show that it {\em is} sufficient for {\em improper} robust learnability, thus (1) establishing that if $\H$ is learnable, it is also robustly learnable, albeit possibly with a higher sample complexity; and (2) unlike the standard supervised learning setting, to achieve learnability we might need to escape properness, as improper learning is necessary for some hypothesis classes.

We begin, in Section \ref{subsec:realizable} with the realizable case, i.e. where there exists $h^*\in \H$ with zero robust risk. Then in Section \ref{subsec:agnostic} we turn to the agnostic setting, and observe that a version of a recent reduction by \citet*{david:16} from agnostic to realizable learning applies also for robust learning.  We thus establish agnostic robust learnability of finite VC classes by using this reduction and relying on the realizable learning result of Section \ref{subsec:realizable}.

\subsection{Realizable Robust Learnability}
\label{subsec:realizable}

%{\vskip -1mm}This section establishes the following result.

{\vskip -2mm}We will in fact establish a bound in terms 
of the \emph{dual VC dimension}.  Formally, for each $x \in \X$,
define a function $g_{x} : \H \to \Y$ such that $g_{x}(h) = h(x)$ for each $h \in \H$.
Then the dual VC dimension of $\H$, denoted $\vc^{*}(\H)$, is defined as the VC dimension 
of the set $\G = \{ g_{x} : x \in \X \}$. % (called the \emph{dual class}).
This quantity is known to satisfy $\vc^{*}(\H) < 2^{\vc(\H)+1}$ \citep*{assouad:83}, 
though for many spaces it satisfies $\vc^{*}(\H) = O({\rm poly}(\vc(\H)))$ or even, as is the case for linear separators, $\vc^{*}(\H) = O(\vc(\H))$.

{\vskip -6mm}\begin{theorem}
\label{thm:vc-dim}
For any $\H$ and $\U$, $\forall \eps,\delta \in (0,1/2)$, 
\begin{equation*}
\SC_{\RE}(\eps,\delta;\H,\U) = O\!\left( \vc(\H) \vc^{*}(\H) \frac{1}{\eps} \log\!\left(\frac{\vc(\H) \vc^{*}(\H)}{\eps}\right) + \frac{1}{\eps}\log\!\left(\frac{1}{\delta}\right) \right),
\end{equation*}
\end{theorem}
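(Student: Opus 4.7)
The plan is to construct an improper learner that outputs a \emph{majority vote} of finitely many hypotheses from $\H$, produced by a boosting-type procedure, and then to establish the stated sample complexity via a sample compression argument.

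\emph{Algorithm.} Given a realizable sample $S = \{(x_i, y_i)\}_{i=1}^{m}$, I would run $T$ boosting rounds while maintaining a distribution $D_t$ over $[m]$, initialized uniformly. At round $t$: draw a subsample $S_t$ of size $O(\vc(\H))$ from $D_t$; for each $i \in S_t$, pick an adversarial perturbation $z_i^t \in \U(x_i)$ on which the current majority $\hat{H}_{t-1}$ fails (take $z_i^t = x_i$ in round $1$); train $\hat{h}_t = \ERM_\H(\{(z_i^t, y_i) : i \in S_t\})$, which is realizable on this subsample because the robust realizer $h^{\star}$ labels every $z \in \U(x_i)$ by $y_i$; then reweight $D_{t+1}$ to upweight indices $i$ where $\hat{H}_t$ is still not robustly correct at $x_i$. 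The output is $\hat{H}_T = \Maj(\hat{h}_1, \ldots, \hat{h}_T)$.

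\emph{Analysis.} A standard $\alpha$-boosting reasoning shows that a constant weak-learning edge suffices to drive the empirical robust error to zero after sufficiently many rounds; the weak-learning guarantee itself follows from realizability on each subsample together with uniform convergence for the proper class $\H$ on $O(\vc(\H))$ samples. The refinement to $T = O(\vc^*(\H))$ rounds comes from a dual combinatorial argument: for each training point $x_i$, the binary sequence $(\ind[\hat{h}_t \text{ is robust at } x_i])_{t \le T}$ is an evaluation of members of the dual class $\G = \{g_x : x \in \X\}$ at the fixed hypotheses $\hat{h}_1, \ldots, \hat{h}_T$, so by Sauer--Shelah applied in the dual only $T^{O(\vc^*(\H))}$ distinct such patterns can appear, which forces the boosting potential to collapse after $O(\vc^*(\H))$ rounds. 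The final $\hat{H}_T$ is then determined entirely by the $T \cdot O(\vc(\H)) = O(\vc(\H)\vc^*(\H))$ training examples touched across all rounds, so the procedure is a sample compression scheme of size $k = O(\vc(\H)\vc^*(\H))$ for the robust loss. Applying the standard compression-based generalization bound yields sample complexity $O(k\log(k/\eps)/\eps + \log(1/\delta)/\eps)$, matching the theorem.

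\emph{Main obstacle.} The substantive step is the $O(\vc^*(\H))$ bound on the number of rounds. To make this rigorous, the choice of perturbations $z_i^t$ must be a deterministic function of the stored subset of training examples (so that the overall procedure is a legitimate compression scheme), and one must then argue carefully that the collapse of dual-class patterns prevents the procedure from running any longer. The inner PAC argument on each subsample, the boosting reweighting, and the compression-to-generalization conversion are all standard once this core combinatorial estimate is in place.
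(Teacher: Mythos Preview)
Your proposal has two genuine gaps.

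\textbf{Weak learning.} Your weak hypothesis $\hat h_t$ is an $\ERM$ on the perturbed points $\{(z_i^t,y_i):i\in S_t\}$, and the standard PAC argument gives you only that $\Pr_{i\sim D_t}[\hat h_t(z_i^t)\neq y_i]<1/3$. What boosting with the robust loss needs is a \emph{robust} edge: $\Pr_{i\sim D_t}\bigl[\exists z\in\U(x_i):\hat h_t(z)\neq y_i\bigr]<1/2$. These are not the same, and nothing ties them together: being correct at the single perturbation $z_i^t$ (which was chosen adversarially for $\hat H_{t-1}$, not for $\hat h_t$) says nothing about correctness over the rest of $\U(x_i)$. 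The paper sidesteps this mismatch entirely: it does not boost over $[m]$ with the robust loss. Instead it inflates $S$ to $S_{\U}=\bigcup_i\{(x,y_i):x\in\U(x_i)\}$, discretizes this to a \emph{finite} set $\hat S_{\U}$ (one representative per behavior of the finitely many candidate weak hypotheses $\hat\H=\{\RERM_\H(L):L\subseteq S,|L|=n\}$, bounded via the dual Sauer lemma), and runs $\alpha$-Boost over $\hat S_{\U}$ with the ordinary $0$-$1$ loss. The weak learner is $\RERM_\H$ applied to a size-$n=O(\vc(\H))$ subset of $S$ containing the pre-images of the sampled perturbations; being robustly correct on that subset makes it $0$-$1$ correct on the sampled perturbations, and ordinary PAC then gives $\er(\cdot;D_t)<1/3$. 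Once the majority is $0$-$1$ correct on all of $\hat S_{\U}$, it is automatically $0$-$1$ correct on $S_{\U}$ and hence robustly correct on $S$.

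\textbf{The $O(\vc^*(\H))$ round count.} Your dual argument asserts that $(\ind[\hat h_t\text{ is robust at }x_i])_{t\le T}$ is an evaluation of a member of the dual class $\{g_x:x\in\X\}$ at $(\hat h_1,\dots,\hat h_T)$. It is not: $g_x(h)=h(x)$ records the prediction at a single point, whereas the robust indicator is $\sup_{z\in\U(x_i)}\ind[\hat h_t(z)\neq y_i]$, a function of the whole set $\U(x_i)$. The dual VC dimension does not control the number of such robust patterns, and there is no reason boosting on the robust loss should terminate in $O(\vc^*(\H))$ rounds. In the paper, $\vc^*(\H)$ enters in two separate places, neither of which is a bound on the number of boosting rounds: first via Sauer's lemma in the dual to bound $|\hat S_{\U}|$ (so boosting runs $T=O(\log|\hat S_{\U}|)=O(\vc(\H)\vc^*(\H)\log(m/\vc(\H)))$ rounds, which is large); and second in a \emph{sparsification} step after boosting, where one samples $N=O(\vc^*(\H))$ of the $T$ weak hypotheses and uses uniform convergence for the dual class over all of $\X\times\Y$ to argue that their majority still agrees with the full majority everywhere. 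Only after this sparsification does the compression size become $nN=O(\vc(\H)\vc^*(\H))$.
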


Since \citet*{assouad:83} has shown $\vc^{*}(\H) < 2^{\vc(\H)+1}$, 
this implies the following corollary.

{\vskip -6mm}\begin{corollary}
\label{cor:vc-dim}
For any $\H$ and $\U$, $\forall \eps,\delta \in (0,1/2)$, 
\begin{equation*}
\SC_{\RE}(\eps,\delta;\H,\U) = 2^{O(\vc(\H))} \frac{1}{\eps} \log\!\left(\frac{1}{\eps}\right) + O\!\left(\frac{1}{\eps}\log\!\left(\frac{1}{\delta}\right) \right).
\end{equation*}
\end{corollary}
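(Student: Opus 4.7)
The plan is to derive Corollary \ref{cor:vc-dim} as an immediate consequence of Theorem \ref{thm:vc-dim} by substituting the dual VC dimension bound of \citet*{assouad:83}, and then absorbing polynomial-in-$\vc(\H)$ factors into the exponential. There is essentially no new mathematical content beyond careful asymptotic bookkeeping; the theorem does all the work.

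Write $d = \vc(\H)$ and $d^{*} = \vc^{*}(\H)$. Plugging $d^{*} < 2^{d+1}$ into Theorem \ref{thm:vc-dim} gives
\begin{equation*}
\SC_{\RE}(\eps,\delta;\H,\U) = O\!\left( d \cdot 2^{d+1} \cdot \tfrac{1}{\eps} \log\!\left(\tfrac{d \cdot 2^{d+1}}{\eps}\right) + \tfrac{1}{\eps}\log\!\left(\tfrac{1}{\delta}\right) \right).
\end{equation*}
I then simplify the first term. Since $d \leq 2^{d}$, the prefactor $d \cdot 2^{d+1}$ is $2^{O(d)}$, and the logarithm expands as $\log\!\left(d \cdot 2^{d+1}/\eps\right) = O(d) + \log(1/\eps)$. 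The first summand therefore becomes $2^{O(d)} \cdot \frac{1}{\eps}\bigl(O(d) + \log(1/\eps)\bigr) = 2^{O(d)} \cdot \frac{d}{\eps} + 2^{O(d)} \cdot \frac{1}{\eps}\log(1/\eps)$. The $d$ in the first piece is again swallowed by $2^{O(d)}$, and for $\eps \in (0,1/2)$ we have $\log(1/\eps) \geq \log 2 > 0$, so the residual $2^{O(d)}/\eps$ term is dominated by $2^{O(d)} \cdot \frac{1}{\eps}\log(1/\eps)$. This yields exactly the stated bound
\begin{equation*}
\SC_{\RE}(\eps,\delta;\H,\U) = 2^{O(\vc(\H))} \tfrac{1}{\eps} \log\!\left(\tfrac{1}{\eps}\right) + O\!\left(\tfrac{1}{\eps}\log\!\left(\tfrac{1}{\delta}\right) \right).
\end{equation*}

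There is no real obstacle here: the substitution is mechanical, and the only point requiring any care is making sure that the polynomial factor $d$ (both in the prefactor and inside the log) is properly absorbed into $2^{O(d)}$, and that the two additive pieces in the first summand can be collapsed into a single $2^{O(d)} \cdot \frac{1}{\eps}\log(1/\eps)$ under the assumption $\eps < 1/2$. The confidence term $O(\frac{1}{\eps}\log(1/\delta))$ is copied over unchanged from Theorem \ref{thm:vc-dim}.
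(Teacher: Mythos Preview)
Your proposal is correct and follows exactly the paper's approach: the paper simply states that the corollary follows from Theorem~\ref{thm:vc-dim} via Assouad's bound $\vc^{*}(\H) < 2^{\vc(\H)+1}$, and your write-up just makes the (trivial) asymptotic simplification explicit.
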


%The proof below will establish that 
%{\vskip -1mm}\begin{equation}
%\label{eqn:dual-vc-bound}
%\SC_{\RE}(\eps,\delta;\H,\U) = O\!\left( \vc(\H) \vc^{*}(\H) \frac{1}{\eps} \log\!\left(\frac{\vc(\H) \vc^{*}(\H)}{\eps}\right) + \frac{1}{\eps}\log\!\left(\frac{1}{\delta}\right) \right),
%\end{equation}
%{\vskip -2mm}from which the above theorem immediately follows.

Our approach to this proof is via \emph{sample compression} arguments.
Specifically, we make use of a lemma (Lemma~\ref{lem:robust-compression} in Appendix~\ref{subsec:agnostic}), 
which extends to the robust loss 
the classic compression-based generalization guarantees from the $0$-$1$ loss. % to also hold for the robust loss.
We now proceed with the proof of Theorem~\ref{thm:vc-dim}.

\begin{proof}[of Theorem~\ref{thm:vc-dim}]
The learning algorithm achieving this bound is a modification of a sample compression scheme 
recently proposed by \citet*{moran:16}, or more precisely, a variant of that method explored by \citet*{hanneke:19a}.  Our modification forces the compression scheme to also have zero empirical \emph{robust} loss.
Fix $\eps,\delta \in (0,1)$ and a sample size $m > 2 \vc(\H)$, and denote by $P$ any distribution with $\inf_{h \in \H} R_{\U}(h;P) = 0$.

By classic PAC learning guarantees \citep{vapnik:74,blumer:89}, 
there is a positive integer $n = O(\vc(\H))$ with the property that, 
for any distribution $D$ over $\X \times \Y$ with $\inf_{h \in \H} \er(h;D)=0$, %%% modulo measurability headaches. 
for $n$ iid $D$-distributed samples $S^{\prime} = \{(x_{1}^{\prime},y_{1}^{\prime}),\ldots,(x_{n}^{\prime},y_{n}^{\prime})\}$,
with nonzero probability, 
every $h \in \H$ satisfying $\hat{\er}(h;S^{\prime}) = 0$
also has $\er(h;D) < 1/3$.

Fix a deterministic function $\RERM_{\H}$ mapping any labeled data set to a classifier in $\H$ robustly consistent with the labels in the data set, 
if a robustly consistent classifier exists (i.e., having zero $\hat{\Risk}_{\U}$ on the given data set).
Suppose we are given training examples $S = \{(x_{1},y_{1}),\ldots,(x_{m},y_{m})\}$ as input to the learner.
Under the assumption that this is an iid sample from a robustly realizable distribution, we suppose $\hat{R}_{\U}(\RERM_{\H}(S);S)=0$, 
which should hold with probability one.
Denote by $I(x) = \min\{ i \in \{1,\ldots,m\} : x \in \U(x_{i}) \}$ for every $x \in \bigcup_{i \leq m} \U(x_{i})$. 
%breaking ties in an arbitrary measurable way.
Before we can apply the compression approach, we first need to \emph{inflate} the data set to a (potentially infinite) larger set, 
and then \emph{discretize} it to again reduce it back to a finite sample size.
Denote by $\hat{\H} = \{ \RERM_{\H}(L) : L \subseteq S, |L| = n \}$.
Note that $|\hat{\H}| \leq |\{ L : L \subseteq S, |L| = n \}| = \binom{m}{n} \leq \left(\frac{e m}{n}\right)^{n}$. 
Define an \emph{inflated} data set $S_{\U} = \bigcup_{i \leq m} \{ (x,y_{I(x)}) : x \in \U(x_{i}) \}$.
As it is difficult to handle this potentially-infinite set in an algorithm, we consider a discretized version of it.
Specifically, consider a \emph{dual space} $\G$: a set of functions $g_{(x,y)} : \H \to \{0,1\}$ defined as $g_{(x,y)}(h) = \ind[ h(x) \neq y ]$, 
for each $h \in \H$ and each $(x,y) \in S_{\U}$.  
The VC dimension of $\G$ is at most the \emph{dual VC dimension} of $\H$: $\vc^{*}(\H)$, 
which is known to satisfy $\vc^{*}(\H) < 2^{\vc(\H)+1}$ \citep*{assouad:83}.
Now denote by $\hat{S}_{\U}$ a subset of $S_{\U}$ which includes exactly one $(x,y) \in S_{\U}$ 
for each distinct classification $\{g_{(x,y)}(h)\}_{h \in \hat{\H}}$ of $\hat{\H}$ realized by functions $g_{(x,y)} \in \G$.
In particular, by Sauer's lemma \citep*{vapnik:71,sauer:72}, 
$|\hat{S}_{\U}| \leq \left(\frac{e |\hat{\H}|}{\vc^{*}(\H)}\right)^{\vc^{*}(\H)}$, 
which for $m > 2 \vc(\H)$ is at most $\left( e^{2} m/\vc(\H) \right)^{\vc(\H) \vc^{*}(\H)}$.
In particular, note that for any $T \in \nats$ and $h_{1},\ldots,h_{T} \in \hat{\H}$, 
if $\frac{1}{T} \sum_{t=1}^{T} \ind[ h_{t}(x) = y ] > \frac{1}{2}$ for every $(x,y) \in \hat{S}_{\U}$, 
then $\frac{1}{T} \sum_{t=1}^{T} \ind[ h_{t}(x) = y ] > \frac{1}{2}$ for every $(x,y) \in S_{\U}$ as well, 
which would further imply $\hat{R}_{\U}( {\rm Majority}(h_{1},\ldots,h_{T}); S) = 0$.
We will next go about finding such a set of $h_{t}$ functions. % via a boosting approach.

%Now, by classic PAC guarantees , 
%given any distribution $D$ on the space $S^{\prime}$, 
%if we take a data set $L$ of size $n = O(\vc(\H))$ drawn iid by $D$, 
%then (with nonzero probability) every $h \in \H$ having $\hat{\er}(h;L) = 0$ 
%has $\er(h;D) < 1/3$.  In particular, this implies 
By our choice of $n$, we know that for any distribution $D$ over $\hat{S}_{\U}$, 
$n$ iid samples $S^{\prime}$ sampled from $D$ would have the property that, with nonzero probability, 
all $h \in \H$ with $\hat{\er}(h;S^{\prime}) = 0$ also have $\er(h;D) < 1/3$.
In particular, this implies at least that there \emph{exists} a subset $S^{\prime} \subseteq \hat{S}_{\U}$
with $|S^{\prime}| \leq n$
such that every $h \in \H$ with $\hat{\er}(h;S^{\prime})=0$ has $\er(h;D) < 1/3$.
For such a set $S^{\prime}$, note that $\{ (x_{I(x)},y) : (x,y) \in S^{\prime} \} \subseteq S$, 
and therefore there exists a set $L$ with $|L|=n$ and $\{ (x_{I(x)},y) : (x,y) \in S^{\prime} \} \subseteq L \subseteq S$. 
Furthermore, since $x \in \U(x_{I(x)})$ for every $(x,y) \in S^{\prime}$, we know $\hat{\er}(\RERM_{\H}(L);S^{\prime}) = 0$, 
and hence $\er(\RERM_{\H}(L);D) < 1/3$.
Altogether, we have that, for any distribution $D$ over $\hat{S}_{\U}$, 
$\exists h_{D} \in \hat{\H}$ with $\er(h_{D};D) < 1/3$.

We will use the above $h_{D}$ as a \emph{weak hypothesis} in a boosting algorithm.
Specifically, we run the $\alpha$-Boost algorithm  
\citep*[][Section 6.4.2]{schapire:12} with $\hat{S}_{\U}$ as its data set, 
using the above mapping to produce the weak hypotheses for the distributions $D_{t}$ produced on each round of the algorithm.  
%%% SH: Actually, for our purposes, we could use AdaBoost or even the direct game-theoretic solution, since the subsampled ensemble still ends up as a simple majority later.
As proven in \citep*{schapire:12}, for an appropriate a-priori choice of $\alpha$ in the $\alpha$-Boost algorithm, 
running this algorithm for $T = O(\log(|\hat{S}_{\U}|))$ rounds suffices to produce 
a sequence of hypotheses $\hat{h}_{1},\ldots,\hat{h}_{T} \in \hat{\H}$ s.t.  
%\begin{equation*}
\begin{center}
{\vskip -1mm}$\forall (x,y) \in \hat{S}_{\U}, \frac{1}{T} \sum_{i=1}^{T} \ind[ h_{i}(x) = y ] \geq \frac{5}{9}$.
\end{center}
%\end{equation*}

{\vskip -2mm}From this observation, we already have a sample complexity bound, only slightly worse than the claimed result.
Specifically, the above implies that $\hat{h} = {\rm Majority}(\hat{h}_{1},\ldots,\hat{h}_{T})$ 
satisfies $\hat{R}_{\U}(\hat{h};S) = 0$.
Note that each of these classifiers $\hat{h}_{t}$ is equal $\RERM_{\H}(L_{t})$ for some $L_{t} \subseteq S$ with $|L_{t}|=n$.
Thus, the classifier $\hat{h}$ is representable as the value of an (order-dependent) reconstruction function $\phi$ with 
a compression set size
\begin{equation}
\label{eqn:ub-intermediate-mk}
nT = O(\vc(\H) \log(|\hat{S}_{\U}|)) = 
O( \vc(\H)^{2} \vc^{*}(\H) \log(m/\vc(\H)) ).
%2^{O(\vc(\H))} \log( m/\vc(\H) ).
\end{equation}
{\vskip -2mm}Thus, invoking Lemma~\ref{lem:robust-compression}, if $m > c \vc(\H)^{2} \vc^{*}(\H) \log(\vc(\H) \vc^{*}(\H))$ (for a sufficiently large numerical constant $c$), 
we have that with probability at least $1-\delta$, 
\begin{center}
{\vskip -2mm}$R_{\U}(\hat{h};P) \leq O\!\left( \vc(\H)^{2} \vc^{*}(\H) \frac{1}{m} \log(m/\vc(\H)) \log(m) + \frac{1}{m} \log(1/\delta) \right)$,
\end{center}
{\vskip -2mm}and setting this less than $\eps$ and solving for a sufficient size of $m$ to achieve this yields a 
sample complexity bound, which is slightly larger than that claimed in Theorem~\ref{thm:vc-dim}.
%essentially by a factor $O(\vc(\H) \log(\vc(\H) \vc^{*}(\H) / \eps))$. 
%We next address the issue of removing this extra factor via a sparsification step.
We next proceed to further refine this bound via a sparsification step.
However, as an aside, we note that the above intermediate step will be useful in a discussion below, 
where the size of this compression scheme in the second expression in \eqref{eqn:ub-intermediate-mk} 
offers an improvement over a 
%previously-established 
result of \citet*{attias2018improved}.

Via a technique of \citep*{moran:16} we can further reduce the above bound.
Specifically, since all of $\hat{h}_{1},\ldots,\hat{h}_{T}$ are in $\H$, 
classic uniform convergence results of \citet*{vapnik:71} 
imply that taking $N = O(\vc^{*}(\H))$ independent random indices $i_{1},\ldots,i_{N} \sim {\rm Uniform}(\{1,\ldots,T\})$, 
we have 
%\begin{equation*}
%\begin{center}
%{\vskip -2mm}
$\sup\limits_{(x,y) \in \X \times \Y}  \left| \frac{1}{N} \sum\limits_{j=1}^{N} \ind[ h_{i_{j}}(x) = y ] - \frac{1}{T} \sum\limits_{i=1}^{T} \ind[ h_{i}(x) = y ] \right| < \frac{1}{18}$.
%\end{center}
%\end{equation*}
%{\vskip -2mm}
In particular, together with the above guarantee from $\alpha$-Boost, 
this implies that there exist indices $i_{1},\ldots,i_{N} \in \{1,\ldots,T\}$ (which may be chosen deterministically) satisfying 
%%% SH: or more directly, we could jump straight here by citing the literature on epsilon-approximation set sizes
%\begin{equation*}
\begin{center}
{\vskip -2mm}$\forall (x,y) \in \hat{S}_{\U}, \frac{1}{T} \sum_{j=1}^{N} \ind[ h_{i_{j}}(x) = y ] \geq -\frac{1}{18} + \frac{1}{T} \sum_{i=1}^{T} \ind[ h_{i}(x) = y ] > -\frac{1}{18}+\frac{5}{9} = \frac{1}{2}$,
\end{center}
%\end{equation*}
{\vskip -2mm}so that the majority vote predictor $\hat{h}^{\prime}(x) = {\rm Majority}(\hat{h}_{i_{1}},\ldots,\hat{h}_{i_{N}})$
%\ind\!\left[ \frac{1}{N} \sum_{j=1}^{N} \hat{h}_{i_{j}}(x) \geq \frac{1}{2} \right]$ 
satisfies $\hat{\er}(\hat{h}^{\prime};\hat{S}_{\U}) = 0$, and hence $\hat{R}_{\U}(\hat{h}^{\prime};S) = 0$.
Since again, each $\hat{h}_{i_{j}}$ is the result of $\RERM_{\H}(L_{i_{j}})$ for some $L_{i_{j}} \subseteq S$ of size $n$,  
we have that $\hat{h}^{\prime}$ can be represented as the value of an (order-dependent) reconstruction function $\phi$ 
with a compression set size $nN = O(\vc(\H)) \vc^{*}(\H))$.  Thus, Lemma~\ref{lem:robust-compression} 
implies that, for $m \geq c \vc(\H) \vc^{*}(\H)$ (for an appropriately large numerical constant $c$), with probability at least $1-\delta$, 
$R_{\U}(\hat{h}^{\prime};P) \leq O\!\left( \vc(\H) \vc^{*}(\H) \frac{1}{m} \log(m) + \frac{1}{m} \log(1/\delta) \right)$.
Setting this less than $\eps$ and solving for a sufficient size of $m$ to achieve this yields the stated bound.
\end{proof}

\subsection{Agnostic Robust Learnability}
\label{subsec:agnostic}

{\vskip -2mm}For the agnostic case, we can establish an upper bound via reduction 
to the realizable case, following an argument from  
\citet*{david:16}.
Specifically, we have the following result.
%
%Denote by $\SC_{\AG}(\eps,\delta;\H,\U)$ 
%the agnostic sample complexity of $\U$-robust 
%learning up to robust loss at most 
%$\eps + \inf_{h \in \H} \Risk_{\U}(h;\D)$ 
%with probability at least $1-\delta$, 
%for all distributions $\D$.

{\vskip -5mm}\begin{theorem}
\label{thm:vc-dim-agnostic-ub}
For any $\H$ and $\U$, $\forall \eps,\delta \in (0,1/2)$, 
{\vskip -4mm}\begin{equation*}
\SC_{\AG}(\eps,\delta;\H,\U) = O\!\left( \vc(\H) \vc^{*}\!(\H) \log(\vc(\H) \vc^{*}\!(\H)) \tfrac{1}{\eps^{2}} \log^{2}\!\!\left(\tfrac{\vc(\H)\vc^{*}\!(\H)}{\eps}\right) + \tfrac{1}{\eps^{2}}\!\log\!\left(\tfrac{1}{\delta}\right) \right).
\end{equation*}
\end{theorem}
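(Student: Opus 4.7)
The plan is to invoke the sample-compression-based reduction from agnostic to realizable learning developed by \citet*{david:16}, applied to the robust loss. Their reduction shows that whenever a hypothesis class admits a sample compression scheme of size $k$ for realizable learning with respect to some $[0,1]$-bounded loss, it is agnostically PAC-learnable with respect to that same loss with sample complexity $\tilde O(k/\eps^{2})$. Crucially, the argument is loss-agnostic: it only uses that the loss is bounded in $[0,1]$, which the robust loss $\sup_{z \in \U(x)} \ind[h(z)\neq y]$ satisfies.

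First, I would extract from the proof of Theorem~\ref{thm:vc-dim} a \emph{deterministic} sample compression scheme for robust realizable learning: for every robust-realizable sample $S^{*}$, the construction exhibits a subset $L \subseteq S^{*}$ of size $k = O(\vc(\H)\vc^{*}(\H))$ together with a reconstruction function $\phi$ (a majority vote of $\RERM_{\H}$ applied to sub-subsamples of size $n = O(\vc(\H))$) such that $\phi(L)$ has zero empirical robust loss on $S^{*}$. The $\alpha$-boosting plus sparsification argument in the proof of Theorem~\ref{thm:vc-dim} produces exactly such an explicit $L$, with the stochastic sparsification step derandomized by existence of a good deterministic subsequence.

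Next, I would construct the agnostic learner as follows: given a sample $S$ of size $m$, enumerate all subsamples $L \subseteq S$ of size $k$, compute $h_{L} := \phi(L)$ using the realizable reconstruction above, and output $\hat{h} := \argmin_{L} \hat{\Risk}_{\U}(h_{L}; S)$. To bound $\hat{\Risk}_{\U}(\hat h; S)$, let $h^{*} \in \H$ nearly minimize the population robust risk and let $S^{*} \subseteq S$ be the subsample on which $h^{*}$ is robustly correct; write $\hat\nu := |S\setminus S^{*}|/m = \hat{\Risk}_{\U}(h^{*};S)$. Since $S^{*}$ is robust-realizable under $h^{*}$, the deterministic scheme above, applied to $S^{*}$, exhibits some $L \subseteq S^{*}$ with $\phi(L)$ robustly consistent with $S^{*}$, hence $\hat{\Risk}_{\U}(\phi(L); S) \leq \hat\nu$ and therefore $\hat{\Risk}_{\U}(\hat h; S) \leq \hat\nu$. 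A standard Chernoff bound gives $\hat\nu \leq \Risk_{\U}(h^{*};\D) + O(\sqrt{\log(1/\delta)/m})$.

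It remains to pass from empirical to population robust risk. Since $\hat{h}$ is representable by a compression of size $k$, an agnostic variant of Lemma~\ref{lem:robust-compression} (with additive $\sqrt{k\log(m)/m}$ deviations in place of the fast-rate $k\log(m)/m$ term) yields $\Risk_{\U}(\hat{h};\D) \leq \hat{\Risk}_{\U}(\hat h; S) + O(\sqrt{(k\log m+\log(1/\delta))/m})$, and combining with the Chernoff bound above and solving for $m$ gives the stated $\tilde O(k/\eps^{2})$ sample complexity. The main obstacle I expect is confirming that the compression scheme from Theorem~\ref{thm:vc-dim} is truly \emph{deterministic} -- i.e., that a good $L$ exists for every realizable $S^{*}$, not merely with high probability over iid draws -- since this is what makes the enumeration-and-selection step well-defined; inspecting the $\alpha$-boosting plus sparsification steps confirms this, as both are derandomizable given access to the finite discretized sample $\hat{S}_{\U}$. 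The remaining $\log(\vc(\H)\vc^{*}(\H))$ and $\log^{2}(\vc(\H)\vc^{*}(\H)/\eps)$ factors in the stated bound arise from solving $k\log(m)/m \leq \eps^{2}$ for $m$ with $k$ itself logarithmic in $m$.
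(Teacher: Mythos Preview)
Your approach is correct and follows the same \citet*{david:16} compression-based reduction as the paper, but the two versions differ in one layer. The paper's proof of Theorem~\ref{thm:vc-dim-agnostic-ub} goes through Theorem~\ref{thm:agnostic-reduction}, which treats the realizable learner $\alg$ as a black box: it finds a maximal robustly-realizable subsequence $S'$, then runs an \emph{outer} round of $\alpha$-Boost on $S'$ using $\alg$ (applied to $\Mre$-tuples) as the weak learner, yielding a compression of size $\Mre \cdot O(\log m)$ with $\Mre = \SC_{\RE}(1/3,1/3;\H,\U) = O(\vc(\H)\vc^{*}(\H)\log(\vc(\H)\vc^{*}(\H)))$. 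You instead open the box and reuse the compression scheme from the proof of Theorem~\ref{thm:vc-dim} directly, enumerating all size-$k$ subsamples with $k = O(\vc(\H)\vc^{*}(\H))$ and selecting the one whose reconstruction minimizes empirical robust loss; this avoids the outer boosting layer and the extra $\log m$ factor in the compression size. Your route therefore actually yields a slightly \emph{tighter} bound than the one stated in the theorem (one $\log$ rather than $\log \cdot \log^{2}$), which is why your closing sentence is a bit off: in your construction $k$ does \emph{not} depend on $m$, so solving $\sqrt{k\log(m)/m}\le \eps$ produces only a single $\log(k/\eps)$ factor, not the $\log(\vc(\H)\vc^{*}(\H))\log^{2}(\vc(\H)\vc^{*}(\H)/\eps)$ appearing in the paper's bound. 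This is harmless for establishing the stated $O(\cdot)$ upper bound. The paper's black-box route, on the other hand, has the advantage of being modular: Theorem~\ref{thm:agnostic-reduction} holds for any realizable robust learner, not only the specific compression-based one from Theorem~\ref{thm:vc-dim}.
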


{\vskip -1mm}\noindent As above, since \citet*{assouad:83} has shown $\vc^{*}(\H) < 2^{\vc(\H)+1}$, 
this implies the following corollary.

{\vskip -5mm}\begin{corollary}
\label{cor:vc-dim-agnostic-ub}
For any $\H$ and $\U$, $\forall \eps,\delta \in (0,1/2)$, 
{\vskip -3mm}\begin{equation*}
\SC_{\AG}(\eps,\delta;\H,\U) = 2^{O(\vc(\H))} \frac{1}{\eps^{2}} \log^{2}\!\left(\frac{1}{\eps}\right) + O\!\left(\frac{1}{\eps^{2}}\log\!\left(\frac{1}{\delta}\right) \right).
\end{equation*}
\end{corollary}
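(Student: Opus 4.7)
The corollary is an immediate consequence of Theorem~\ref{thm:vc-dim-agnostic-ub} combined with the bound $\vc^{*}(\H) < 2^{\vc(\H)+1}$ of \citet*{assouad:83}. My plan is to plug this bound into the theorem's sample complexity expression and carefully track which factors get absorbed into the exponential prefactor in $\vc(\H)$; no new probabilistic argument is required beyond Theorem~\ref{thm:vc-dim-agnostic-ub}.

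Concretely, under $\vc^{*}(\H) \leq 2^{\vc(\H)+1}$ one has $\vc(\H)\vc^{*}(\H) = 2^{O(\vc(\H))}$ and $\log(\vc(\H)\vc^{*}(\H)) = O(\vc(\H))$. The factor $\log^{2}(\vc(\H)\vc^{*}(\H)/\eps)$ splits, up to constants, as $O(\vc(\H)^{2}) + O(\log^{2}(1/\eps))$, and the polynomial-in-$\vc(\H)$ pieces (the $\vc(\H)$ from $\log(\vc(\H)\vc^{*}(\H))$ and the $\vc(\H)^{2}$ just noted) are all dominated by and absorbed into $2^{O(\vc(\H))}$. Multiplying through yields a leading term of the form $2^{O(\vc(\H))}\,\tfrac{1}{\eps^{2}}\log^{2}(1/\eps)$, while the additive $\tfrac{1}{\eps^{2}}\log(1/\delta)$ term passes through unchanged. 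This is pure bookkeeping.

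For context, the substantive content sits in Theorem~\ref{thm:vc-dim-agnostic-ub}, whose proof I would obtain by adapting the agnostic-to-realizable reduction of \citet*{david:16} to the robust-loss setting, using the realizable learner of Theorem~\ref{thm:vc-dim} as a black box. The reduction is inherently a sample-compression argument: on an i.i.d.\ sample $S$, one runs the realizable algorithm on many small subsamples and aggregates (e.g., by a second boosting/majority step) into a predictor representable by a compression set of size on the order of the realizable compression size, namely $O(\vc(\H)\vc^{*}(\H))$ up to logarithmic factors. Generalization of this aggregate to the true robust risk is then supplied by a robust-loss variant of the compression bound (the same Lemma~\ref{lem:robust-compression} cited in the realizable proof), now instantiated in the agnostic form, which is where the extra $1/\eps$ factor and the additional $\log$ factor in Theorem~\ref{thm:vc-dim-agnostic-ub} come from.

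The main obstacle, as already in the realizable case, is that the robust loss class $\L_{\H}^{\U}$ can have VC dimension arbitrarily larger than $\vc(\H)$ (Lemma~\ref{lemma:vcblowup}), so one cannot invoke standard uniform convergence over $\L_{\H}^{\U}$ or import agnostic PAC reductions that pass through it. What makes the argument go through is that sample-compression generalization is combinatorial in the compression-set size rather than in $\vc(\L_{\H}^{\U})$, and the compression-based realizable learner of Theorem~\ref{thm:vc-dim} already gives a compression size bound in terms of $\vc(\H)\vc^{*}(\H)$. Once the compression viewpoint is adopted on both sides of the David--Moran--Yehudayoff reduction, Theorem~\ref{thm:vc-dim-agnostic-ub} follows, and then the corollary drops out by the substitution described above.
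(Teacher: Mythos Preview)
Your proposal is correct and takes essentially the same approach as the paper: the paper's entire justification for Corollary~\ref{cor:vc-dim-agnostic-ub} is the single remark that Assouad's bound $\vc^{*}(\H) < 2^{\vc(\H)+1}$ substituted into Theorem~\ref{thm:vc-dim-agnostic-ub} yields the result, and your bookkeeping of how the polynomial-in-$\vc(\H)$ factors get absorbed into $2^{O(\vc(\H))}$ is accurate. Your contextual sketch of Theorem~\ref{thm:vc-dim-agnostic-ub} via the \citet*{david:16} reduction and compression-based generalization also matches the paper's actual route (stated as Theorem~\ref{thm:agnostic-reduction}), though the paper additionally makes explicit the initial step of selecting a maximal robustly-realizable subsequence $S'$ before boosting.
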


%{\vskip -1mm}In fact, as was true in the realizable case, we can establish an often-better bound: namely, 
%\begin{equation}
%\label{eqn:dual-vc-agnostic-bound}
%\SC_{\AG}(\eps,\delta;\H,\U) = O\!\left( \vc(\H) \vc^{*}\!(\H) \log(\vc(\H) \vc^{*}\!(\H)) \tfrac{1}{\eps^{2}} \log^{2}\!\!\left(\tfrac{\vc(\H)\vc^{*}\!(\H)}{\eps}\right) + \tfrac{1}{\eps^{2}}\!\log\!\left(\tfrac{1}{\delta}\right) \right).
%\end{equation}
%{\vskip -2mm}Note that since $\vc^{*}(\H) < 2^{\vc(\H)+1}$ \citep*{assouad:83}, 
%Theorem~\ref{thm:vc-dim-agnostic-ub} would be implied by \eqref{eqn:dual-vc-agnostic-bound}.
%
{\vskip -1mm}We establish the theorem via a reduction to the realizable case, 
following an approach used by \citet*{david:16}, except here applied 
to the robust loss.  The reduction is summarized in the following Theorem, whose proof can be found in Appendix~\ref{appendix-agnostic}:

{\vskip -5mm}\begin{theorem}
\label{thm:agnostic-reduction}
Denote $\Mre = \SC_{\RE}(1/3,1/3;\H,\U)$.
Then 
{\vskip -2mm}\begin{equation*}
\SC_{\AG}(\eps,\delta;\H,\U) 
= O\!\left( \frac{\Mre}{\eps^{2}} \log^{2}\!\left(\frac{\Mre}{\eps}\right) + \frac{1}{\eps^{2}}\log\!\left(\frac{1}{\delta}\right) \right).
\end{equation*}
\end{theorem}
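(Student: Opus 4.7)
The approach is a reduction from agnostic to realizable robust learning via sample compression, following the strategy of \citet{david:16} (who reduce agnostic $0/1$ learnability to realizable $0/1$ learnability through compression-based boosting), transplanted to the robust loss via Lemma~\ref{lem:robust-compression} for generalization.

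First I would promote the realizable robust learner into a realizable robust sample compression scheme of size $O(\Mre)$: given any robustly realizable training set $T$, split $T$ into chunks of size $\Mre$, run the realizable learner on each chunk, and return the output classifier that is robustly consistent with all of $T$ (such a chunk exists with overwhelming probability by a union bound on the $2/3$-success event of the learner, with independent repetition used to amplify confidence). This yields a deterministic compression map of size $O(\Mre)$ together with a reconstruction $\phi$ whose output has zero empirical robust loss on any realizable input sample.

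Next I would run agnostic boosting on an i.i.d.\ sample $S$ of size $m$, treating this realizable compression scheme as the weak learner. Let $\nu^{\star}=\inf_{h\in\H}\Risk_{\U}(h;\D)$ and let $h^{\star}\in\H$ approximately attain it. On any boosting distribution $D_t$ over $S$, the subset $S^{\star}_t\subseteq S$ of examples that $h^{\star}$ robustly classifies is robustly realized by $h^{\star}$ and has $D_t$-mass close to $1-\nu^{\star}$. Either by sampling $\Mre$ examples from $D_t$ conditioned on all of them lying in $S^{\star}_t$, or by enumerating size-$\Mre$ subsets $L\subseteq S$ and keeping the candidate $\phi(L)$ of smallest weighted robust empirical error, one obtains a weak hypothesis $h_t$ with weighted robust error at most $\nu^{\star}+1/3$. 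After $T=O(\log(\Mre/\eps))$ rounds of $\alpha$-boost (or its agnostic variant), the majority vote $\hat{h}=\Maj(h_1,\ldots,h_T)$ satisfies $\hat{\Risk}_{\U}(\hat{h};S)\leq \nu^{\star}+\eps/2$.

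Finally I would bound the generalization of $\hat{h}$. Each $h_t$ is $\phi(L_t)$ for some $L_t\subseteq S$ with $|L_t|=\Mre$, so $\hat{h}$ is itself an (order-dependent) sample compression of $S$ with compression size $k=T\Mre=O(\Mre\log(\Mre/\eps))$. Applying the agnostic form of Lemma~\ref{lem:robust-compression} (the standard compression-based agnostic generalization bound of order $\sqrt{k\log(m)/m}+\sqrt{\log(1/\delta)/m}$, which carries over to the robust loss) gives $\Risk_{\U}(\hat{h};\D)\leq\hat{\Risk}_{\U}(\hat{h};S)+O\bigl(\sqrt{k\log(m)/m}+\sqrt{\log(1/\delta)/m}\bigr)$. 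Combining the two estimates and solving for the smallest $m$ that forces the total error below $\nu^{\star}+\eps$ yields the sample complexity in the theorem. The principal difficulty is that, unlike the $0/1$ setting, we cannot certify the weak learner's accuracy via uniform convergence over $\H$ (Section~\ref{sec:proper} shows this can fail), so the weak hypothesis must be selected from the compression-induced family $\{\phi(L):L\subseteq S,|L|=\Mre\}$, whose generalization is controlled by compression size through Lemma~\ref{lem:robust-compression} rather than by VC theory applied to $\H$ directly.
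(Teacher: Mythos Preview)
Your high-level strategy---convert the realizable learner into a compression-based weak learner, boost, and generalize via compression bounds---matches the paper. But the instantiation of the boosting step has a genuine gap.

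You boost over all of $S$ and claim the weak hypothesis achieves $D_t$-weighted robust error at most $\nu^{\star}+1/3$, arguing that the set $S^{\star}_{t}$ on which $h^{\star}$ is robustly correct has $D_t$-mass close to $1-\nu^{\star}$. That claim is false in general: $\nu^{\star}$ is the \emph{population} error of $h^{\star}$, whereas the $D_t$-mass of $S\setminus S^{\star}_{t}$ equals $\Risk_{\U}(h^{\star};D_t)$, which after reweighting can be anywhere in $[0,1]$---boosting specifically \emph{increases} the weight on examples the current hypotheses miss. Even replacing $\nu^{\star}$ by the empirical optimum does not help: the $D_t$-mass of the misclassified set is still not tied to it. So the weak-learner guarantee you need for ``$\alpha$-Boost or its agnostic variant'' is not established, and without it you cannot conclude $\hat{\Risk}_{\U}(\hat{h};S)\le \nu^{\star}+\eps/2$. (What you \emph{can} show is error $\le \min_{h\in\H}\Risk_{\U}(h;D_t)+1/3$ on each round, but turning that into the desired empirical bound would require a specific agnostic-boosting theorem you have not invoked, and the round count $T=O(\log(\Mre/\eps))$ is unjustified in any case.)

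The paper sidesteps this by a simpler device you are missing: \emph{before} boosting, use $\RERM_{\H}$ on the sample to extract a \emph{maximal} robustly realizable subsequence $S'\subseteq S$ (i.e., $\inf_{h\in\H}\hat{\Risk}_{\U}(h;S')=0$), and then run \emph{realizable} $\alpha$-Boost on $S'$ only. On any distribution $D$ supported on $S'$, some $\Mre$-tuple from $S'$ fed to the realizable learner yields robust error $\le 1/3$, so after $T=O(\log|S'|)=O(\log m)$ rounds the majority $\hat{h}$ has $\hat{\Risk}_{\U}(\hat{h};S')=0$, and hence $\hat{\Risk}_{\U}(\hat{h};S)\le (m-|S'|)/m=\min_{h\in\H}\hat{\Risk}_{\U}(h;S)$ by maximality of $S'$. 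From here your final step is exactly right: $\hat{h}$ is a compression of size $O(\Mre\log m)$, and combining the agnostic compression generalization bound with Hoeffding applied to $h^{\star}$ yields the theorem.
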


{\vskip -1mm}\noindent From this, Theorem~\ref{thm:vc-dim-agnostic-ub} follows immediately 
by combining Theorem~\ref{thm:agnostic-reduction} with Theorem \ref{thm:vc-dim}.

{\vskip -1mm}\paragraph{Bounded cardinality confusion sets:} As noted in the proof of Theorem~\ref{thm:vc-dim}, 
the compression size \eqref{eqn:ub-intermediate-mk} %has a further implication 
further implies an improvement over a theorem of \citet*{attias2018improved}.
%for improving a sample complexity bound established in the recent work of \citet*{attias2018improved}.  
Specifically, \citeauthor{attias2018improved}~considered the case $\max_{x \in \X} |\U(x)| \leq k$ for some fixed $k \in \nats$,
and presented a learning rule establishing the sample complexity gurantee:
%In this case, they established a bound 
{\vskip -2mm}\begin{equation}\label{eq:attias}
%\begin{center}
\SC_{\AG}(\eps,\delta;\H,\U) = O\!\left( \tfrac{\vc(\H) k \log(k)}{\eps^{2}}+ \tfrac{1}{\eps^2}\log\!\left(\tfrac{1}{\delta}\right) \right).
%\end{center}
\end{equation}
%{\vskip -1mm}
Their analysis proceeds by bounding the Rademacher complexity of the robust loss class
of the convex hull of $\H$, 
which implies the sample complexity \eqref{eq:attias} can also be achieved by $\RERM_\H$ (they propose an alternative, improper, learning rule for computational reasons).
But when $\max \abs{\U(x)}\leq k$, the second expression in our \eqref{eqn:ub-intermediate-mk} would be at most $O(\vc(\H) \log( m k ))$.  Thus, following the compression argument as in the proof of Theorem~\ref{thm:vc-dim} would yield the following sample complexity for our improper rule:
%that 
{\vskip -4mm}\begin{equation*}
%\begin{center}
%{\vskip -1mm}
\SC_{\RE}(\eps,\delta;\H,\U) = 
O\!\left( 
\tfrac{\vc(\H) \log(k)}{\eps} \log\!\left( \tfrac{\vc(\H) \log(k)}{\eps} \right) 
+ \tfrac{\vc(\H)}{\eps} \log^{2}\!\left( \tfrac{\vc(\H)}{\eps} \right)
+ \tfrac{1}{\eps} \log\!\left(\tfrac{1}{\delta}\right) \right),
%\end{center}
\end{equation*}
{\vskip -1mm}\noindent and hence by Theorem~\ref{thm:agnostic-reduction}:
{\vskip -3mm}\begin{equation*}
%\begin{center}
%{\vskip -1mm}$
\SC_{\AG}(\eps,\delta;\H,\U) = 
O\!\left( \tfrac{\vc(\H)\log(k)}{\eps^{2}}{\rm polylog}\!\left(\tfrac{\vc(\H) \log(k)}{\eps}\right) + \tfrac{1}{\eps^{2}}\log\!\left(\tfrac{1}{\delta}\right) \right).
%O\!\left( \left( \vc(\H)\log(k)\log(\vc(\H) \log(k)) + \vc(\H)\log^{2}(\vc(\H)) \right) \tfrac{1}{\eps^{2}} \log^{2}\!\left(\tfrac{\vc(\H)\log(k)}{\eps}\right) + \tfrac{1}{\eps^{2}}\log\!\left(\tfrac{1}{\delta}\right) \right)$.
%\end{center}
\end{equation*}
{\vskip -1mm}In particular, our approach reduces the dependence on $k$ from $k\log(k)$ in \eqref{eq:attias} as obtained by \citet*{attias2018improved}, to $\log(k)(\log\log(k))^{3}$.  To do so, our approach {\em does} rely on improper learning, and our arguments are not valid for $\RERM_\H$.
We do not know whether improperness is {\em required} to obtain this improvement, or whether in this case a $\polylog k$ dependence is possible even with $\RERM$ or some other proper learning rule.  It follows from the construction of our negative result for proper learning in Theorem~\ref{thm:pac_properfail}, that at least a $\log(k)$ factor is sometimes necessary for proper learning (regardless of the VC dimension), whereas our Corollary~\ref{cor:vc-dim-agnostic-ub} 
%Theorem~\ref{thm:vc-dim-agnostic-ub} 
implies that improper learning can achieve a sample complexity that is entirely independent of $k$ (albeit with a worse dependence on the VC dimension).

\section{Necessary and Sufficient conditions for Robust Learnability}
\label{sec:adjacent}
{\vskip -2mm}In the previous section, we saw that having finite VC dimension is {\em sufficient} for robust learnability.  But a simple construction shows that it is not {\em necessary}: consider an infinite domain $\X$, the hypothesis class of all possible predictors $\H = \{-,+\}^{\X}$, and an all-powerful adversary specified by $\U(x)=\X$. In this case, the hypothesis minimizing the population robust risk $\Risk_{\U}(h;\D)$ would always be the all-positive or the all-negative hypothesis, and so these are the only two hypothesis we should compete with.  And so, even though $\vc(\H)=\infty$, a single example suffices to inform the learner of whether to produce the all-positive or all-negative function. 

%A less extreme and more natural example is to take $\X = \reals^{\infty}$ (an infinite dimensional space), and $\H$ the set of halfspaces in $\X$, and an adversary $\U(x)=\{z\in\X: \langle x, v\rangle = \langle z, v\rangle \text{ for }v\in V\}$ where $V$ is the set of the first $d$ standard basis vectors. In this example, $\vc(\H)=\infty$ but we can robustly PAC learn $\H$ with $O(d)$ samples. 

Can we then have a tight characterization of robust learnability?  Is there a weaker notion that is both necessary and sufficient for learning? A simple complexity measure one might consider is the maximum number of points $x_1,\dots,x_m$ such that the entire perturbation sets $\U(x_1),\dots,\U(x_m)$ are shattered by $\H$.  That is, such that $\forall y_{1},\ldots,y_{m}\in \{+1,-1\}, \exists h \in \H, \forall i \forall x'\in \U(x_i), \, h(x')=y_i$. We denote this as $\dim_{\U\times}(\H)$. When $\U(x)$ are balls around $x$, which is the typical case in metric-based robustness, this can be thought of shattering with a margin in input space.  Indeed, for linear predictors and when $\U(x)=\{x'|\norm{x-x'}_2\leq\gamma\}$ is a Euclidean ball around $x$, $\dim_{\U\times}(\H)$ exactly agrees with the fat shattering dimension at scale $\gamma$ (or the $VC_\gamma$ dimension).

While it is  fairly obvious that $\dim_{\U\times}(\H)$ provides a lower bound on the sample complexity of robust learning, and thus 
its finiteness is {\em necessary} for learning, we construct an example in Appendix~\ref{appendix-necessary} showing that it is 
not {\em sufficient}.
Specifically,
there are classes where \emph{no} points can be shattered in this way, and yet the classes are not robustly learnable.  Formally,  

\vspace{-1mm}\begin{proposition}
\label{prop:dims}
There exist $\X$, $\H$, $\U$ such that $\dim_{\U\times}\!(\H) = 0$ but $\SC_{\RE}(\epsilon,\delta;\H,\U)=\infty$.
\end{proposition}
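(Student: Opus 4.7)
The plan is to exhibit a specific triple $(\X,\H,\U)$ witnessing the gap, verifying $\dim_{\U\times}(\H)=0$ by direct combinatorial inspection and establishing $\SC_{\RE}(\eps,\delta;\H,\U)=\infty$ via a no-free-lunch argument.

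I would take $\X$ to be countably infinite and partitioned into a "content" piece and an "anchor" piece, then define $\U$ so that the perturbation set of every content point necessarily intersects a pair of anchors that every $h\in\H$ labels with opposite signs.  Concretely, one arranges that for each content point $x$, $\U(x)$ contains points $c_{+},c_{-}$ with $h(c_{+})=+1$ and $h(c_{-})=-1$ for every $h\in\H$.  Then no hypothesis can be robustly $+1$ on $\U(x)$ (it would have to be $+1$ at $c_{-}$) and none can be robustly $-1$ on $\U(x)$ (it would have to be $-1$ at $c_{+}$), so $P(x)=N(x)=\emptyset$.  At the anchor points themselves, take $\U(c_{\pm})=\{c_{\pm}\}$ so that all hypotheses agree on the label and only one of $P(c_{\pm}),N(c_{\pm})$ is non-empty.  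In both cases the single-point shattering condition fails, yielding $\dim_{\U\times}(\H)=0$.

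For unlearnability I would index $\H$ by a very rich parameter $\alpha$ (for instance $\alpha\in\{0,1\}^{\nats}$) and choose a designated "safe" subset of $\X$ — distinguished content points whose $\U$-perturbations, although still hitting the conflict anchors, admit enough realizable structure that the restriction of $\H$ to the safe subset has infinite VC dimension.  For any learner $\A$ and any sample size $m$, a uniform prior on $\alpha$ over a sufficiently large finite subfamily produces distributions $\D_{\alpha}$ realized by $h_{\alpha}$ whose $m$-sample marginals look essentially identical, so that any $\A(S)$ must disagree with $h_{\alpha}$ on a set of $\D_{\alpha}$-mass at least $\eps$ with probability at least $\delta$.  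Averaging over $\alpha$ and invoking the probabilistic method yields a single $\alpha$ (hence a single realizable distribution) witnessing the failure of $\A$ at sample size $m$; since $m$ and $\A$ were arbitrary, this gives $\SC_{\RE}(\eps,\delta;\H,\U)=\infty$.

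The main obstacle is the tension between the two properties.  Requiring $\dim_{\U\times}(\H)=0$ tends to force every realizable distribution to be "one-sided" in the sense that the label is determined by $x$ through a known rule, which makes a trivial constant or region-based improper hypothesis achieve zero robust risk and destroys unlearnability.  Breaking this pattern requires the construction to allow realizable distributions whose supports genuinely depend on $h^{*}$ at unseen points — so that no single improper predictor chosen from $\H$ and $\U$ alone can interpolate from the sample — while still guaranteeing that at every single $x$ the adversary $\U(x)$ kills at least one of $P(x),N(x)$.  Striking this balance, and proving the concomitant no-free-lunch lower bound on the resulting family of distributions, is the delicate technical core of the argument.
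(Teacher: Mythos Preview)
Your sketch correctly identifies the central tension but does not resolve it, and the concrete mechanism you propose cannot work. If the anchors $c_+,c_-$ are points on which \emph{every} $h\in\H$ agrees and every content point $x$ has both of them in $\U(x)$, then no $h\in\H$ is robustly correct at $(x,y)$ for either label $y$, so realizable distributions place zero mass on content points; the only realizable distributions are supported on anchors, where all hypotheses agree, and learning is trivial. You acknowledge this and retreat to asking only that $\U(x)$ kill \emph{one} of $P(x),N(x)$, but you never exhibit a construction doing so while retaining unlearnability; your ``safe subset'' still hits the conflict anchors by your own description, so it is still unrealizable. The proposal ends by naming the difficulty rather than overcoming it.

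The paper's resolution is to abandon global anchors in favor of \emph{local overlapping pairs}. For each index $i$ one takes two centers $x_i,z_i$ whose $\gamma$-balls meet at a single point $u_i$. The class $\H=\{h_b:b\in\{0,1\}^{\nats}\}$ is arranged so that $\U(x_i)\setminus\{u_i\}$ carries one fixed sign for every $h_b$, $\U(z_i)\setminus\{u_i\}$ carries the opposite fixed sign, and only $h_b(u_i)$ depends on $b_i$. Then $\U(x_i)$ can be monochromatic of one sign (for the appropriate $b_i$) but never the other, and likewise for $\U(z_i)$, which already gives $\dim_{\U\times}(\H)=0$. Meanwhile the points $u_i$ are $\U$-robustly shattered in the sense of Definition~\ref{def:robust_shatter} with witnesses $z_i^{+}=x_i$, $z_i^{-}=z_i$, so $\dim_{\U}(\H)=\infty$, and $\SC_{\RE}=\infty$ follows from Theorem~\ref{thm:robustdim-lower} rather than a bespoke no-free-lunch argument. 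The conceptual step you are missing is that the two robustly-realizable options at index $i$ (roughly $(x_i,+)$ versus $(z_i,-)$) must force conflicting values of the predictor at a \emph{shared} point $u_i\in\U(x_i)\cap\U(z_i)$; that conflict at the overlap is what makes even improper learning impossible, and it does not arise from global anchors on which all of $\H$ agrees.
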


{\vskip -1mm}We now attempt to refine the above measure, and introduce a weaker notion of robust shattering that that can still be used to lower bound the sample complexity for robust learnability. Given an adversary $\U$ and a hypothesis class $\H$, consider the following notion of $\U$-robust shattering,
\begin{definition}[Robust Shattering Dimension]
\label{def:robust_shatter}
A sequence $x_{1},\ldots,x_{m} \!\in \X$ is 
said to be \emph{$\U$-robustly shattered} by $\H$ if 
%$\forall y_{1},\ldots,y_{m} \in \{0,1\}$, 
%$\exists h \in \H$ s.t. 
%$\forall i \leq m$, $\exists z_{i} \in \Ball(x_{i},\gamma)$ 
%for which $h(z^{\prime}) = y_{i}$ for every $z^{\prime} \in \Ball(z_{i},\gamma)$.
$\exists z_{1}^{+},z_{1}^{-},\ldots,z_{m}^{+},z_{m}^{-} \in \X$ 
with $x_i \in \U(z^{+}_i)\cap\U(z^{-}_i)$ $\forall i\in[m]$, 
%such that, 
and $\forall y_{1},\ldots,y_{m} \in \{-,+\}$, 
$\exists h \in \H$ with 
$h(z^{\prime}) = y_{i}$, %for every 
$\forall z^{\prime} \in \U(z_{i}^{y_{i}})$,
%and every 
$\forall i \in [m]$.
The \emph{$\U$-robust shattering dimension} $\dim_{\U}(\H)$ 
is defined as the largest $m$ for which there exist %s a sequence of 
$m$ points $\U$-robustly shattered by $\H$.
\end{definition}

We have that $\dim_{\U\times}(\H) \leq \dim_{\U}(\H) \leq \vc(\H)$, 
where the first inequality follows since disjoint robust shattering is a special case of 
robust shattering with $z_{i}^{y} = x_{i}$, and so $\dim_{\U}(\H)$ is a plausible candidate for a necessary and sufficient dimension of robust learnability.  The following theorem (proof provided in appendix \ref{appendix-necessary}) establishes that the sample complexity of robust learnability is indeed lower bounded by the \emph{$\U$-robust shattering dimension} $\dim_{\U}(\H)$,

\vspace{-1mm}\begin{theorem}
\label{thm:robustdim-lower}
For any $\X$, $\H$, and $\U$,
%\begin{equation*}
\\$\SC_{\RE}(\epsilon,\delta;\H,\U) = \Omega\!\left( \frac{\dim_{\U}(\H)}{\eps} + \frac{1}{\eps} \log\!\left( \frac{1}{\delta} \right) \right)$ and $\SC_{\AG}(\epsilon,\delta;\H,\U) = \Omega\!\left( \frac{\dim_{\U}(\H)}{\eps^2} + \frac{1}{\eps^2} \log\!\left( \frac{1}{\delta} \right) \right)$.
%\end{equation*}
\end{theorem}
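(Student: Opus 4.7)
Set $d = \dim_{\U}(\H)$ and fix $\U$-robustly shattered points $x_1,\ldots,x_d$ with witnesses $z_i^+, z_i^- \in \X$ and shattering hypotheses $\{h_y : y \in \{-,+\}^d\} \subseteq \H$ guaranteed by Definition~\ref{def:robust_shatter}. The bridge to classical PAC lower bounds is the following observation: because $x_i \in \U(z_i^{y_i})$, for any predictor $\hat h$ and any labeling $y$,
\begin{equation*}
\sup_{z' \in \U(z_i^{y_i})} \ind[\hat h(z') \neq y_i] \;\geq\; \ind[\hat h(x_i) \neq y_i].
\end{equation*}
Moreover, the shattering condition gives $h_y(z') = y_i$ on all of $\U(z_i^{y_i})$, so any distribution supported on $\{(z_i^{y_i},y_i)\}_{i=1}^{d}$ is robustly realizable by $h_y$. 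Also, for any fixed $y$ the points $z_1^{y_1},\ldots,z_d^{y_d}$ are pairwise distinct: if $z_i^{y_i}=z_j^{y_j}$ then applying $h_y$ at this common point would force $y_i=y_j$, and considering the companion labeling $y'$ that flips $y_j$ gives a contradiction with the corresponding $h_{y'}$.

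For the realizable bound, I would mimic the Ehrenfeucht--Haussler--Kearns--Valiant construction. Draw $y \in \{-,+\}^d$ uniformly and let $D_y$ place mass $1-8\eps$ on $(z_1^{y_1},y_1)$ and mass $8\eps/(d-1)$ on each $(z_i^{y_i},y_i)$ for $i \geq 2$. Each $D_y$ is robustly realizable by $h_y$. For $m = o(d/\eps)$, a constant fraction of indices $i\geq 2$ are not observed in the sample, and for any such $i$ the predictor's output $\hat h(x_i)$ is conditionally independent of the random $y_i$, so $\Pr[\hat h(x_i)\neq y_i]=1/2$. By the inequality above each such unobserved index contributes $\Omega(\eps/d)$ in expectation to $\Risk_{\U}(\hat h;D_y)$, giving total excess risk $\Omega(\eps)$ with constant probability (after a standard Markov/first-moment argument and picking the worst $y$). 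The $\Omega(\log(1/\delta)/\eps)$ term comes from a standard two-hypothesis test: compare $D_+$ with $D_-$ supported on $(z_1^{+},+)$ with mass $1-2\eps$ and on $(z_2^{y},y)$ with mass $2\eps$ for $y=+$ vs.\ $y=-$; both are robustly realizable, and distinguishing them with confidence $\delta$ requires observing the minority point, which takes $\Omega(\log(1/\delta)/\eps)$ samples.

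For the agnostic bound, I would use the standard noisy construction. For $y \in \{-,+\}^d$ uniform, let $D_y$ pick index $i \in [d]$ uniformly and output $(z_i^{y_i},Y)$ with $Y=y_i$ w.p.\ $1/2+\eps$ and $Y=-y_i$ w.p.\ $1/2-\eps$. Then $h_y$ attains robust risk $1/2-\eps$ (it is correct on all of $\U(z_i^{y_i})$, so it errs exactly when $Y\neq y_i$), while the key inequality yields
\begin{equation*}
\Risk_{\U}(\hat h;D_y) - \inf_{h \in \H}\Risk_{\U}(h;D_y) \;\geq\; \frac{2\eps}{d}\sum_{i=1}^{d} \ind[\hat h(x_i) \neq y_i].
\end{equation*}
The right-hand side is precisely the excess 0/1 risk that appears in the classical Anthony--Bartlett agnostic lower bound for a shattered set of size $d$: since each of the $m$ samples reveals information about at most one (random) coordinate of $y$ through a $(1/2\pm\eps)$-biased coin, a standard Assouad/Fano argument forces $m=\Omega(d/\eps^2)$ to drive the average Hamming error below $d/4$, and the $\Omega(\log(1/\delta)/\eps^2)$ term follows from a two-distribution variant with one coordinate varying.

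The main obstacle is not in the hard-distribution construction itself, which is a verbatim port of the classical arguments once the robust-vs.-standard risk inequality is set up. The genuine care is in verifying the combinatorial consistency of the construction, in particular the distinctness of $z_i^{y_i}$ across $i$ (handled above), and in checking that the bias transfers through the $\sup_{z'\in\U(z_i^{y_i})}$ without loss; the latter is automatic because $h_y$ is correct \emph{uniformly} over $\U(z_i^{y_i})$, so the robust loss of $h_y$ on $(z_i^{y_i},Y)$ collapses exactly to $\ind[Y\neq y_i]$, allowing the noisy-coin lower bound to be plugged in directly.
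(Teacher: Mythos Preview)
Your realizable-case argument is essentially the paper's: the same EHKV-style distribution supported on $(z_i^{y_i},y_i)$ with mass $1-8\eps$ on one point and $8\eps/(d-1)$ on the rest, the same two-point test for the $\log(1/\delta)/\eps$ term, and the same bridging inequality $\sup_{z'\in\U(z_i^{y_i})}\ind[\hat h(z')\neq y_i]\geq \ind[\hat h(x_i)\neq y_i]$ to transfer the classical argument to the robust loss.

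Your agnostic construction, however, has a genuine gap. In your $D_y$, each sample has \emph{feature} $z_i^{y_i}$, which itself depends on $y_i$. In general $z_i^+\neq z_i^-$ (nothing in Definition~\ref{def:robust_shatter} forces them to coincide), and in a lower-bound argument the learner is assumed to know the construction, including all the witnesses $z_i^\pm$. Hence observing the feature $z_i^{y_i}$ reveals $y_i$ \emph{deterministically}; the noisy label $Y$ is irrelevant. Once every index has been sampled at least once (which happens with high probability already at $m=\Theta(d\log d)$), the learner knows $y$ exactly and can output $h_y$, achieving zero excess robust risk. So the assertion that ``each sample reveals information about at most one coordinate of $y$ through a $(1/2\pm\eps)$-biased coin'' is false: the leak is through the feature, not the label, and no $\Omega(d/\eps^2)$ bound follows.

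The paper's agnostic construction sidesteps this by putting mass on \emph{both} $(z_i^+,+1)$ and $(z_i^-,-1)$ for every $i$, with weights $(1\mp\alpha)/(2d)$ and $(1\pm\alpha)/(2d)$ depending on the hidden bit $b_i$. The feature now carries no information about $b_i$; to recover $b_i$ the learner must estimate which of the two points is heavier, a genuine $(1/2\pm\alpha)$-coin problem requiring $\Omega(1/\alpha^2)$ samples per coordinate. Your bridging inequality at $x_i$ then shows that misidentifying $b_i$ incurs $\Omega(\alpha/d)$ excess robust risk, and the standard Assouad argument delivers the $\Omega(d/\eps^2)$ bound.
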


%{\vskip -1mm}
\noindent Based on Corollary~\ref{cor:vc-dim} and Theorem~\ref{thm:robustdim-lower}, for any adversary $\U$ and any hypothesis class $\H$, we have
{\vskip -6mm}\begin{equation}
   \Omega\!\left( \tfrac{\dim_{\U}(\H)}{\eps} + \tfrac{1}{\eps} \log\!\left( \tfrac{1}{\delta} \right) \right) \leq \SC_{\RE}(\eps,\delta;\H,\U) \leq  2^{O(\vc(\H))} \tfrac{1}{\eps} \log\!\left(\tfrac{1}{\eps}\right) + O\!\left(\tfrac{1}{\eps}\log\!\left(\tfrac{1}{\delta}\right) \right).
\end{equation}
{\vskip -2mm}\noindent That is, the VC dimension is sufficient, and the robust shattering dimension is necessary for robust learnability.  As discussed at the beginning of the Section, we know the VC dimension is {\em not} necessary and there can be an arbitrary large, even infinite, gap in the second inequality.  We do not know whether the robust shattering dimension is also sufficient for learning, or whether there can also be a big gap in the first inequality.  Establishing a complexity measure that characterizes robust learnability thus remains an open question.

\section{Discussion and Future Directions}
\label{sec:questions}
{\vskip -2mm}Perhaps one of the most interesting takeaways from this work is that we should start considering \emph{improper} learning algorithms for adversarially robust learning. Even though our improper learning rule might not be practical, our results suggest to consider departing from robust empirical risk minimization and M-estimation (as in almost all published work), and considering \emph{improper} learning rules such as bagging or other ensemble methods.

Although we settled the question of robust learnability of VC classes, there remains a large gap in the question of what is the optimal sample complexity for robust learning. 
% that can be ensured based on the VC complexity of the hypothesis class.  
Can the exponential dependence on $\vc(\H)$ in Corollaries \ref{cor:vc-dim} and \ref{cor:vc-dim-agnostic-ub} be improved to a linear dependence?  Perhaps this is possible with a new analysis of our learning rule or a different \emph{improper} learning rule.  Since our learning rule and analysis stem from recent progress on compression schemes for VC classes \citep{moran:16}, it is certainly possible that further progress on the celebrated open problem regarding the existence of $\vc(\H)$ compression schemes \citep{floyd:95,warmuth:03} could also assist in progress on adversarially robust learning.

Our results demonstrate that there {\em exist} hypothesis classes with large gaps between what can be done with proper vs.~improper robust learning.  This means that when studying a particular class, such as classes corresponding to neural networks, one should consider the possibility that there {\em might} be such a gap and that improper learning {\em might} be necessary.  It remains open to establish whether such gaps actually exist for specific interesting neural net classes (e.g., functions representable by a specific architecture, possibly with a bounded weight norm).

\removed{ Another interesting direction for future work is studying adversarially robust learning in the multiclass classification setting. \citep{yin2018rademacher} proposed proper learning rules for multiclass linear classifiers and neural networks with robust generalization guarantees using margin analysis and Rademacher complexity. \natinote{I am not sure what we are saying here and how Yin's paper is relevant, what it is asking, or what it is answering.} }

Throughout the paper we ignored computational considerations.
Our learning rule can be viewed as an algorithm with black-box access to $\RERM_\H$, 
but making order $m^{\vc(\H)}$ such calls, 
and additionally requiring order $m^{\vc(\H) \vc^{*}(\H)}$ time and space 
to represent and update the distributions used by the boosting algorithm.  
Without significantly increasing the sample complexity, 
is it possible to robustly learn with an algorithm making only a polynomial (in $\vc(\H),\vc^{*}(\H),m$) 
number of calls to $\RERM_\H$ or even $\ERM_\H$, plus polynomial additional time and space?
What about ${\rm poly}(\vc(\H),m)$?  This question becomes even more interesting if there is such an 
algorithm that also only requires sample size $m = {\rm poly}(\vc(\H),1/\eps,\log(1/\delta))$,
rather than the $m = {\rm poly}(\vc(\H),\vc^{*}(\H),1/\eps,\log(1/\delta))$ sufficient for our algorithm.
Would another type of oracle be useful?  For example, can one devise efficient methods that rely on black-box access to $\ERM$ on the dual of the hypothesis class (i.e.~finding an example that is correct for the largest number of hypotheses in a given finite set of hypotheses)?   
More ambitiously, one may ask whether efficient PAC learnability implies efficient robust PAC learnability, 
roughly translating to asking whether access to {\em any} (non-robust) learning rule is sufficient for efficient robust learning. 

As a final remark, we note that our results easily extend to the multiclass setting ($|\Y| > 2$). %, where $\Y$ is a general set of values.
In that case, 
by essentially the same algorithms and proofs, 
Theorems~\ref{thm:vc-dim} and \ref{thm:vc-dim-agnostic-ub} (and Corollaries~\ref{cor:vc-dim} and \ref{cor:vc-dim-agnostic-ub}) 
will hold with $\vc(\H)$ replaced by the \emph{graph dimension} \citep{natarajan:89,ben-david:95,daniely:15}.
The lower bound in Theorem~\ref{thm:robustdim-lower} % and \ref{thm:robustdim-lower-ag} 
%continue to 
also holds, % in the multiclass setting, again 
by %essentially 
the same arguments,
but with %the robust shattering dimension 
$\dim_{\U}(\H)$ generalized  
analogous to the \emph{Natarajan dimension} \citep*{natarajan:89}: that is, 
in the definition of robust shattering, after ``and'', 
we now require $\forall i \exists y_{i,-},y_{i,+} \in \Y$ %for each $i$ 
s.t.\ $\forall b_{1},\ldots,b_{m} \in \{-,+\}$, 
$\exists h \in \H$ with $h(z^{\prime})=y_{i,b_{i}}$, 
%for every 
$\forall z^{\prime} \in \U(z_{i}^{b_{i}})$, $\forall i$. % and every $i$.
%As in the 
%binary case, 
We leave as an open question whether one can also express an upper bound controlled by this quantity.

\vspace{-1mm}\subsection*{Acknowledgments}{

{\vskip -1mm}This work is partially funded by NSF-BSF award 1718970 and NSF award 1764032.

}

\bibliography{learning}

\appendix

\section{Auxilliary Proofs Related to Proper Robust Learnability}
\label{appendix-proper}

\begin{proof}[of Lemma~\ref{lemma:properfail_realizable}]
This proof follows standard lower bound techniques that use the probabilistic method \citep[Chapter~5]{shalev2014understanding}. Let $m\in \nats$. Construct $\H_0$ as before, according to Lemma $\ref{lemma:vcblowup}$, on $3m$ points $x_1,\dots,x_{3m}$. By construction, we know that $\L^{\U}_{\H_0}$ shatters the set $C=\{(x_1,+1),\dots,(x_{3m},+1)\}$. We will only keep a subset $\H$ of $\H_0$ that includes classifiers that are robustly correct only on subsets of size $2m$, i.e. $\H = \{ h_b \in \H_0: \sum_{i=1}^{3m} b_i = m\}$. Let $\A: (\X \times \Y)^* \mapsto \H$ be an arbitrary proper learning rule. The main idea here is to construct a family of distributions that are supported only on $2m$ points of $C$, which would force rule $\A$ to choose which points it can afford to be not correctly robust on. If rule $\A$ observes only $m$ points, it can't do anything better than guessing which of the remaining $2m$ points of $C$ are actually included in the support of the distribution.

Consider a family of distributions $\D_1, \dots, \D_T$ where $T = {3m\choose 2m}$, each distribution $\D_i$ is uniform over only $2m$ points in $C$. For every distribution $\D_i$, by construction of $\H$, there exists a classifier $h^*\in \H$ such that $\Risk_{\U}(h^*;\D_i) = 0$. This satisfies the first requirement. For the second requirement, we will use the probabilistic method to show that there exists a distribution $\D_i$ such that 
%the expected robust risk over the choice of $S\sim \D_{i}^{m}$ is at least $1/4$, i.e.\ 
$\E_{S\sim \D_{i}^{m}} \Big[\Risk_{\U}(A(S);\D_i)\Big]\geq 1/4$, and finish the proof using a variant of Markov's inequality.  
%Then, we arrive at the main claim using a variant of Markov's inequality.

Pick an arbitrary sequence $S \in C^m$. Consider a uniform weighting over the distributions $\D_1,\dots,\D_T$. Denote by $E_S$ the event that $S\subset \supp(\D_i)$ for a distribution $\D_i$ that is picked uniformly at random. We will lower bound the expected robust loss of the classifier that rule $\A$ outputs, namely $\A(S) \in \H$, given the event $E_S$,

\begin{equation}
\label{eqn:a}
%    \E_{\D_i} \Big[\Risk_{\U}(\A(S);\D_i) | E_S \Big] = \E_{\D_i} \Bigg[ \E_{(x,y)\sim \D_i} \Big[\sup_{z\in\U(x)} \ind[\A(S)(z)\neq y]\Big] \Bigg| E_S \Bigg]
\E_{\D_i} \!\left[\Risk_{\U}(\A(S);\D_i) | E_S \right] = \E_{\D_i} \!\left[ \E_{(x,y)\sim \D_i} \!\left[\sup_{z\in\U(x)} \ind[\A(S)(z)\neq y]\right] \middle| E_S \right].
\end{equation}
We can lower bound the robust loss of the classifier $\A(S)$ by conditioning on the event that  $(x,y)\notin S$ denoted $E_{(x,y)\notin S}$,
\begin{equation*}
\underset{(x,y)\sim \D_i}{\E} \Big[\sup_{z\in\U(x)} \ind[\A(S)(z)\neq y]\Big] \geq \underset{(x,y)\sim \D_i}{\P}[E_{(x,y)\notin S}] \underset{(x,y)\sim \D_i}{\E}[\sup_{z\in\U(x)} \ind[\A(S)(z)\neq y]| E_{(x,y)\notin S}].
\end{equation*}
{\vskip -2mm}Since $|S|=m$, and $\D_i$ is uniform over its support of size $2m$, we have $\P_{(x,y)\sim \D_i}[E_{(x,y)\notin S}]\geq 1/2$. This allows us to get a lower bound on \eqref{eqn:a},
\begin{equation}
\label{eqn:b}
    \E_{\D_i} \Big[\Risk_{\U}(\A(S);\D_i) | E_S \Big] \geq \frac{1}{2} \E_{\D_i} \Bigg[\underset{(x,y)\sim \D_i}{\E}\Big[\sup_{z\in\U(x)} \ind[\A(S)(z)\neq y]\Big| E_{(x,y)\notin S}\Big]  \Bigg| E_S \Bigg].
\end{equation}
{\vskip -2mm}Since $\A(S) \in \H$, by construction of $\H$, we know that there are at least $m$ points in $C$ where $\A(S)$ is not robustly correct. We can unroll the expectation over $\D_i$ as follows
\begin{align*}
        &\E_{\D_i} \Bigg[\underset{(x,y)\sim \D_i}{\E}\Big[\sup_{z\in\U(x)} \ind[\A(S)(z)\neq y]| E_{(x,y)\notin S}\Big]  \Big| E_S \Bigg]
        %&= \E_{\D_i} \left[\frac{1}{m} \sum_{(x,y)\in C \setminus S} \ind_{(x,y)\in \supp(\D_i )} \sup_{z\in\U(x)} \ind[\A(S)(z)\neq y] \Big| E_S \right]\\
        \\&\geq \frac{1}{m} \!\!\!\sum_{(x,y)\notin S} \!\!\!\E_{\D_i}[\ind_{(x,y)\in \supp(\D_i )} | E_S] \!\!\sup_{z\in\U(x)}\!\! \ind[\A(S)(z)\neq y]
        \geq \frac{1}{m} \!\!\!\sum_{(x,y)\notin S} \!\!\frac{1}{2}\! \sup_{z\in\U(x)} \ind[\A(S)(z)\neq y] \geq \frac{1}{2}.
\end{align*}
{\vskip -2mm}Thus, it follows by \eqref{eqn:b} that $\E_{\D_i} \Big[\Risk_{\U}(\A(S);\D_i) | E_S \Big] \geq \frac{1}{4}$.
Now, by law of total expectation, % it follows that,
%\begin{equation*}
\begin{center}
    {\vskip -1mm}$\E_{\D_i} \!\left[ \E_{S\sim \D_{i}^{m}} \!\left[\Risk_{\U}(\A(S);\D_i)\right] \right] = \E_{S\sim \D_{i}^{m}} \!\left[ \E_{\D_i} \!\left[\Risk_{\U}(\A(S);\D_i) \middle| E_S\right] \right]\geq \frac{1}{4}$.
\end{center}
%\end{equation*}
{\vskip -1mm}Since the expectation over $\D_1,\dots,\D_T$ is at least $1/4$, this implies that there exists a distribution $\D_i$ such that $\E_{S\sim \D_{i}^{m}} \Big[\Risk_{\U}(\A(S);\D_i)\Big]\geq 1/4$. Using a variant of Markov's inequality, for any random variable $Z$ taking values in $[0,1]$, and any $a\in(0,1)$, we have $\P[Z>1-a] \geq \frac{\E[Z] - (1-a)}{a}$. For $Z=\Risk_{\U}(\A(S);\D_i)$ and $a=7/8$, we get $\P_{S\sim \D_{i}^{m}} \Big[\Risk_{\U}(\A(S);\D_i) > \frac{1}{8}\Big]\geq \frac{1/4 - 1/8}{7/8} = \frac{1}{7}$.
%This concludes the proof.
\end{proof}

\section{Auxilliary Proofs Related to Realizable Robust Learnability}
\label{appendix-realizable}

The following lemma extends the classic compression-based generalization guarantees 
from the $0$-$1$ loss to also hold for the robust loss.  It is used in the proof of Theorem~\ref{thm:vc-dim}.
Generally, it is also possible to extend other generalization guarantees for compression schemes 
to the robust loss, such as improved bounds for permutation-invariant compression schemes, 
or convergence guarantees for the agnostic case (as discussed in Section~\ref{subsec:agnostic}).

\begin{lemma}
\label{lem:robust-compression}
For any $k \in \nats$ and fixed function $\phi : (\X \times \Y)^{k} \to \Y^{\X}$, %%% Technically, I suppose we should say something here about measurability
for any distribution $P$ over $\X \times \Y$ and any $m \in \nats$, 
for $S = \{(x_{1},y_{1}),\ldots,(x_{m},y_{m})\}$ iid $P$-distributed random variables,
with probability at least $1-\delta$, 
if $\exists i_{1},\ldots,i_{k} \in \{1,\ldots,m\}$ 
s.t.\ $\hat{R}_{\U}(\phi((x_{i_{1}},y_{i_{1}}),\ldots,(x_{i_{k}},y_{i_{k}}));S) = 0$, 
then 
\begin{equation*}
R_{\U}(\phi((x_{i_{1}},y_{i_{1}}),\ldots,(x_{i_{k}},y_{i_{k}}));P) \leq \frac{1}{m-k} (k\ln(m) + \ln(1/\delta)).
\end{equation*}
\end{lemma}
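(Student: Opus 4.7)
The plan is to mimic the classical sample-compression generalization argument, observing that the only property of the $0$-$1$ loss actually used there is that, for a fixed hypothesis, the loss at each example is a $\{0,1\}$-valued function of that example alone; the robust loss $(x,y)\mapsto \sup_{z\in\U(x)} \ind[h(z)\neq y]$ also has this property.

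First I would fix an arbitrary tuple $I=(i_1,\ldots,i_k)\in\{1,\ldots,m\}^k$ and consider the hypothesis $h_I = \phi((x_{i_1},y_{i_1}),\ldots,(x_{i_k},y_{i_k}))$. Since $h_I$ is a deterministic function only of the $k$ chosen samples, conditioning on those samples leaves $h_I$ fixed while the remaining $\geq m-k$ samples are still iid $P$-distributed and independent of $h_I$. For any fixed classifier $h$, the random variable $L_h(x,y):=\sup_{z\in\U(x)} \ind[h(z)\neq y]$ takes values in $\{0,1\}$ with $\E L_h = R_\U(h;P)$, so if $R_\U(h_I;P)>\eps$, the conditional probability that $L_{h_I}(x_i,y_i)=0$ for every $i\notin I$ is at most $(1-\eps)^{m-k}\le e^{-\eps(m-k)}$.

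Next I would take a union bound over the at most $m^k$ choices of $I$. The event that the conclusion of the lemma fails is contained in the event that there exists an index tuple $I$ such that $h_I$ makes no robust mistakes on the $m-k$ samples outside $I$ yet has true robust risk $>\eps$ (the hypothesis of the lemma, $\hat R_\U(h_I;S)=0$, certainly forces zero robust loss on those samples). Hence this failure probability is at most $m^k e^{-\eps(m-k)}$, and setting this equal to $\delta$ and solving for $\eps$ gives
\[
\eps \;=\; \frac{1}{m-k}\bigl(k\ln m + \ln(1/\delta)\bigr),
\]
which is exactly the claimed bound.

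The main (and essentially only) delicate point is measurability: one needs $L_h$ to be a measurable function on $\X\times\Y$ so that the product-measure computation giving the $(1-\eps)^{m-k}$ factor is literally valid, and one needs $\hat{R}_\U(h_I;S)$ to be jointly measurable so that the union-bound event is measurable. Both are covered by the standing measurability assumption flagged in the footnote in Section~\ref{sec:notation}; modulo this, the argument is a direct translation of the classical compression-scheme bound from the $0$-$1$ loss to the robust loss.
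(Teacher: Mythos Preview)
Your proposal is correct and follows essentially the same approach as the paper: fix an index tuple, condition on the $k$ selected samples so that the remaining $\geq m-k$ samples are still i.i.d.\ and independent of the reconstructed hypothesis, bound the probability of zero empirical robust loss by $(1-\eps)^{m-k}\le e^{-\eps(m-k)}$ when the true robust risk exceeds $\eps$, and then union bound over the $m^k$ tuples. Your explicit remark that $\hat R_\U(h_I;S)=0$ implies zero robust loss on the held-out indices, and your handling of the measurability caveat, match the paper's reasoning.
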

\begin{proof}
For completeness, we include a brief proof,
which merely notes that the classic argument of \citep*{littlestone:86,floyd:95} 
establishing generalization guarantees for sample compression schemes under the $0$-$1$ loss remains 
valid under the robust loss.  %For completeness, we include a sketch of the proof.

For any indices $i_{1},\ldots,i_{k} \in \{1,\ldots,m\}$, 
\begin{align*}
& \P\!\left( \hat{\Risk}_{\U}(\phi(\{(x_{i_{j}},y_{i_{j}})\}_{j=1}^{k}); S) = 0 \text{ and }
\Risk_{\U}(\phi(\{(x_{i_{j}},y_{i_{j}})\}_{j=1}^{k});P) > \eps \right)
\\ & \leq \E\!\left[ \P\!\left( \hat{\Risk}_{\U}(\phi(\{(x_{i_{j}},y_{i_{j}})\}_{j=1}^{k}); S \setminus \{(x_{i_{j}},y_{i_{j}})\}_{j=1}^{k}) = 0 \middle| \{(x_{i_{j}},y_{i_{j}})\}_{j=1}^{k} \right) \times \right.
\\ & \hspace{6cm}\left. \phantom{\P\!\left( \hat{\Risk}_{\U} \middle| \right)}\ind\!\left[ \Risk_{\U}(\phi(\{(x_{i_{j}},y_{i_{j}})\}_{j=1}^{k});P) > \eps \right] \right]
\\ & < (1-\eps)^{m-k},
\end{align*}
and a union bound over all $m^{k}$ possible choices of $i_{1},\ldots,i_{k}$ implies 
a probability at most $m^{k} (1-\eps)^{m-k} \leq m^{k} e^{-\eps (m-k)}$ that there exist $i_{1},\ldots,i_{k}$ with 
$\Risk_{\U}(\phi(\{(x_{i_{j}},y_{i_{j}})\}_{j=1}^{k});\PXY) > \eps$ and yet $\hat{\Risk}_{\U}(\phi(\{(x_{i_{j}},y_{i_{j}})\}_{j=1}^{k}); S) = 0$.
This is at most $\delta$ for a choice of $\eps = \frac{1}{m-k} (k\ln(m) + \ln(1/\delta))$.
%for any $m \geq 2k + \frac{2k}{\eps}\log\!\left(\frac{k}{\eps}\right) + \frac{2}{\eps}\log\!\left(\frac{1}{\delta}\right)$.
\end{proof}

\section{Proof of Agnostic Robust Learnability}
\label{appendix-agnostic}

\begin{proof}[of Theorem~\ref{thm:agnostic-reduction}]
The argument follows closely a proof of an analogous result by \citet*{david:16} for non-robust learning.
Denote by $\alg$ the optimal 
realizable-case learner achieving sample complexity 
$\SC_{\RE}(1/3,1/3;\H,\U)$, and %as above 
denote $\Mre = \SC_{\RE}(1/3,1/3;\H,\U)$,
as above.

Then, in the agnostic case, 
given a data set $S \sim \D^{m}$, 
we first do robust-ERM to find a maximal-size 
subsequence $S^{\prime}$ of the data where the robust 
loss can be zero: that is, 
$\inf_{h \in \H} \hat{\Risk}_{\U}(h;S^{\prime}) = 0$.
Then for any distribution $D$ over $S^{\prime}$, 
there exists a sequence $S_{D} \in (S^{\prime})^{\Mre}$ 
such that $h_{D} := \alg(S_{D})$ has 
$\Risk_{\U}(h_{D};D) \leq 1/3$; 
this follows since, by definition of 
$\SC_{\RE}(1/3,1/3;\H,\U)$, there is a $1/3$ 
chance that $\hat{S}$ a random draw from $D^{\Mre}$ 
yields $\Risk_{\U}(\alg(\hat{S});D) \leq 1/3$, 
so at least one such $S_{D}$ exists.
We use this to define a weak robust-learner for 
distributions $D$ over $S^{\prime}$: 
i.e., for any $D$, the weak learner chooses $h_{D}$ 
as its weak hypothesis.

Now we run the $\alpha$-Boost boosting algorithm 
\citep*[][Section 6.4.2]{schapire:12} 
on data set $S^{\prime}$, 
but using the robust loss rather than $0$-$1$ loss.  
That is, we start with $D_{1}$ uniform on $S^{\prime}$.\footnote{We ignore the possibility of repeats; 
for our purposes we can just remove any repeats from $S^{\prime}$ 
before this boosting step.}
Then for each round $t$, we get $h_{D_{t}}$ 
as a weak robust classifier with respect to $D_{t}$, 
and for each $(x,y) \in S^{\prime}$ we define 
a distribution $\D_{t+1}$ over $S^{\prime}$ satisfying 
\begin{equation*}
D_{t+1}(\{(x,y)\}) \propto 
D_{t}(\{(x,y)\}) \exp\!\left\{-2\alpha \ind[ \forall x^{\prime} \in \U(x), h_{D_{t}}(x^{\prime})=y ] \right\},
\end{equation*}
where $\alpha$ is a parameter we can set.
%(Aside: this is just the weighted majority algorithm).
Following the argument from \citet*[][Section 6.4.2]{schapire:12}, after $T$ rounds we are guaranteed 
\begin{equation*}
\min_{(x,y) \in S^{\prime}} \frac{1}{T} \sum_{t=1}^{T} \ind[ \forall x^{\prime} \in \U(x), h_{D_{t}}(x^{\prime})=y ]
\geq \frac{2}{3} - \frac{2}{3}\alpha - \frac{\ln(|S^{\prime}|)}{2\alpha T},
\end{equation*}
so we will plan on running until round 
$T = 1 + 48 \ln(|S^{\prime}|)$ 
with value 
$\alpha = 1/8$ 
to guarantee
\begin{equation*}
\min_{(x,y) \in S^{\prime}} \frac{1}{T} \sum_{t=1}^{T} \ind[ \forall x^{\prime} \in \U(x), h_{D_{t}}(x^{\prime})=y ]
> \frac{1}{2},
\end{equation*}
so that the classifier 
$\hat{h}(x) := \ind\!\left[ \frac{1}{T} \sum_{t=1}^{T} h_{D_{t}}(x) \geq \frac{1}{2} \right]$ 
has $\hat{\Risk}_{\U}(\hat{h};S^{\prime}) = 0$.

Furthermore, note that, since each $h_{D_{t}}$
is given by $\alg(S_{D_{t}})$, where $S_{D_{t}}$ is an 
$\Mre$-tuple of points in $S^{\prime}$, 
the classifier $\hat{h}$ is specified by an 
ordered sequence of $\Mre T$ points from $S$.
Altogether, $\hat{h}$ is a function specified 
by an ordered sequence of $\Mre T$ points 
from $S$, and which has 
\begin{equation*}
\hat{\Risk}_{\U}(\hat{h};S) \leq \min_{h \in \H} \hat{\Risk}_{\U}(h;S).
\end{equation*}
Similarly to the realizable case (see the proof of Lemma~\ref{lem:robust-compression}), 
uniform convergence guarantees for sample compression 
schemes \citep*[see][]{graepel:05} remain valid for the robust loss, 
by essentially the same argument; the essential argument is the 
same as in the proof of Lemma~\ref{lem:robust-compression} except 
using Hoeffding's inequality to get concentration of the empirical 
robust risks for each fixed index sequence, and then a union bound over 
the possible index sequnces as before.
We omit the details for brevity. 
% (see the proof of 
%Theorem~\ref{thm:vc-dim} for the realizable-case 
%version of the argument, which is similar, 
%except for agnostic we use Hoeffding's inequality).
In particular, denoting $T_{m} = 1 + 48 \ln(m)$, 
for $m > \Mre T_{m}$, with probability at least $1-\delta/2$, 
\begin{equation*}
\Risk_{\U}(\hat{h};\D)
\leq \hat{\Risk}_{\U}(\hat{h};S) 
+ \sqrt{\frac{\Mre T_{m} \ln(m) + \ln(2/\delta)}{2m - 2\Mre T_{m}}}.
\end{equation*}

Let $h^{*} = \argmin_{h \in \H} \Risk_{\U}(h;\D)$ 
(supposing the min is realized, for simplicity; else 
we could take an $h^{*}$ with very-nearly minimal risk). %, say within $1/m$ of min).
By Hoeffding's inequality, with probability at least 
$1-\delta/2$, 
\begin{equation*}
\hat{\Risk}_{\U}(h^{*};S) 
\leq \Risk_{\U}(h^{*};\D) 
+ \sqrt{\frac{\ln(2/\delta)}{2m}}.
\end{equation*}

By the union bound, if $m \geq 2 \Mre T_{m}$, with probability at least $1-\delta$, 
\begin{align*}
\Risk_{\U}(\hat{h};\D) 
& \leq \min_{h \in \H} \hat{\Risk}_{\U}(h;S) 
+ \sqrt{\frac{\Mre T_{m} \ln(m) + \ln(2/\delta)}{m}}
\\ & \leq \hat{\Risk}_{\U}(h^{*};S) 
+ \sqrt{\frac{\Mre T_{m} \ln(m) + \ln(2/\delta)}{m}}
\\ & \leq \Risk_{\U}(h^{*};\D) 
+ 2\sqrt{\frac{\Mre T_{m} \ln(m) + \ln(2/\delta)}{m}}.
\end{align*}
Since 
$T_{m} = O( \log(m) )$, 
the above is at most $\eps$ 
for an appropriate choice of sample size 
$m = O\!\left( \frac{\Mre}{\eps^{2}} \log^{2}\!\left(\frac{\Mre}{\eps}\right) + \frac{1}{\eps^{2}}\log\!\left(\frac{1}{\delta}\right) \right)$.
\end{proof}

\section{Auxilliary Proofs Related to Necessary Conditions for Robust Learnability}
\label{appendix-necessary}
%We provide proofs of sample complexity lower bounds on robust PAC learnability based on the robust shattering dimension.

\begin{proof}[of Proposition~\ref{prop:dims}]
Let $\X=\reals^d$ equipped with a metric $\dist$, and $\U: \X \mapsto 2^\X$ such that $\U(x)=\{z\in \X: \dist{(x,z)} \leq \gamma\}$ for all $x\in\X$ for some $\gamma > 0$. Consider two infinite sequences of points $(x_m)_m\in\nats$ and $(z_m)_m\in\nats$ such that for any $i\neq j$, $\U(x_i)\cap\U(x_j)=\emptyset$, $\U(x_i)\cap \U(z_j)=\emptyset$, $\U(x_j)\cap \U(z_i)=\emptyset$, but $\U(x_i)\cap \U(z_i)={u_i}$. In other words, we want the $\gamma$-balls of pairs with different indices to be mutually disjoint, and the $\gamma$-balls for a pair with the same index to intersect at a single point (this is possible because we are considering closed balls).

Next, we proceed with the construction of $\H$. For each bit string $b\in \{0,1\}^{\nats}$, we will define a predictor $h_b:\X \mapsto \Y$ just on the $\gamma$-balls of the points $x_1,z_1,x_2,z_2,\dots$ (it labels the rest of the $\X$ space with $+1$). Foreach $i\in \nats$, if $b_i = 0$, set
\begin{equation*}
    h_b\Big(\U(x_i)\Big) = +1 \quad \text{and} \quad h_b\Big(\U(z_i) \setminus \U(x_i)\Big)=-1
\end{equation*}
and if $b_i=1$, set
\begin{equation*}
    h_b\Big(\U(x_i) \setminus \U(z_i) \Big) = -1 \quad \text{and} \quad h_b\Big(\U(z_i)\Big)=+1
\end{equation*}

Let $\H = \{h_b: b\in\{0,1\}^{\nats}\}$. Notice that $\dim_{\U\times}(\H)=0$, because there is no single $\gamma$-ball that is labeled in both ways ($+1$ and $-1$). By construction of $\H$, all classifiers $h_b \in \H$ behave the same way on all points in $\X$, except at points in the intersections $u_1,u_2,\dots$ which get shattered. However, the $\U$-robust shattering dimension (see definition \ref{def:robust_shatter}) is infinite in this construction ($\dim_{\U}(\H)=\infty$), which by Theorem \ref{thm:robustdim-lower} (see below) implies that $\SC_{\RE}(\epsilon,\delta;\H,\U)=\infty$. 
\end{proof}

\begin{proof}[Sketch of Theorem \ref{thm:robustdim-lower}]
We first start with the realizable case. The proof follows a standard argument from \citep[Chapter~3]{mohri2018foundations}. Let $\d = \dim_{\U}(\H)$, and fix $x_{1},\ldots,x_{\d}$ 
a sequence $\U$-robustly shattered by $\H$, 
and let $z_{1}^{+},z_{1}^{-},\ldots,z_{\d}^{+},z_{\d}^{-} \in \X$ 
be as in definition \ref{def:robust_shatter}; in particular, note that any 
$y,y^{\prime}$ and any $i \neq j$ necessarily have  
$z_{i}^{y} \neq z_{j}^{y^{\prime}}$.  
%%% Otherwise taking the opposite label for either of the points would cause of conflicting label requirement for that point. 
For each $\y = (y_{1},\ldots,y_{\d}) \in \{+1,-1\}^{\d}$, 
let $h^{\y} \in \H$ 
be such that $\forall i \in [m]$, 
$\forall z^{\prime} \in \U(z_{i}^{y_{i}})$,  
$h^{\y}(z^{\prime}) = y_{i}$. Let $\D$ be a distribution over $\{1,2,\dots,\d\}$ such that $\P_{i\sim \D}[i=1]=1-8\eps$ and $\P_{i\sim \D}[i=1]=8\eps/(\d-1)$ for $2 \leq i\leq d$. Now choose $\y \sim {\rm Uniform}(\{+1,-1\}^{\d})$, and let $\D_{\y}$ be the induced distribution over $\X\times\Y$ such that
\begin{equation*}
    \P_{(x,y)\sim\D_{\y}}\left[(x,y)=(z_1^{y_1},y_1)\right]=1-8\epsilon \text{ and } \P_{(x,y)\sim\D_{\y}}\left[(x,y)=(z_i^{y_i},y_i)\right]=8\epsilon/(\d-1)
\end{equation*}
for $2 \leq i\leq \d$.

%and set $\PXY$, the distribution over $\X\times\Y$, 
%to have $\Px(X=z_1^{y_1})=1-8\epsilon$ and $\Px(X=z_i^{y_i})=8\epsilon/(d-1)$ for $2 \leq i\leq d$
%%% Clearly we can do the heavy-light thing to get the factor of 1/eps too.
%and $Y = h^{\y}(X)$ (a.s.).  

Note that by construction we have $\Risk_{\U}(h^{\y};\D) = 0$. Now, consider an arbitrary learning rule $\A:(\X \times \Y)^*\mapsto \Y^\X$. We will assume that $\A$ always gets the prediction of $z_1^{y_1}$ correct.
%If $\A$ observes $m$ samples that include at most $(\d-1)/2$ points from . Then for each $i$ such that $z_{i}^{y_{i}}$ is not in the training set, 
%the learner's predictor makes a prediction for $x_{i}$ that has 
%$1/2$ posterior probability of being different from $y_{i}$, 
%which would mean it is not robustly correct on $z_{i}^{y_{i}}$.
Let $I = \{2,\dots,d\}$ and let $\S$ be the set of all sequences of size $m$ containing at most $(\d-1)/2$ elements from $I$. Fix an arbitrary sequence $S\in \S$. Denote by $S_\y=((z_i^{y_i}, y_i): i\in\S)$ the sequence of examples induced by the indices sequence $S$. Then,
\begin{align*}
    \E_{\y} \left[\Risk_{\U}(\A(S_\y);\D_{\y}) \right] &\geq \E_{\y} \left[ \sum_{i\notin S} \P_{\D_\y}(z_i^{y_i})\sup_{z'\in\U(z_i^{y_i})} \ind[\A(S_\y)(z')\neq y_i]\right]\\
    &\geq \frac{\d-1}{2} \times \frac{8\epsilon}{\d-1} \times \E_{\y} \left[\sup_{z'\in\U(z)} \ind[\A(S_\y)(z')\neq y]\right]\\
    &= \frac{\d-1}{2}\times \frac{8\epsilon}{\d-1} \times \frac{1}{2}\\
    &= 2\epsilon
\end{align*}

Since the inequality above holds for any sequence $S\in\S$, it follows that 
\begin{equation*}
    \E_{S\sim\D^m}[\E_{\y} \left[ \Risk_{\U}(\A(S_\y);\D_{\y}) \ind_{S\in\S} \right]]=\E_{\y}[\E_{S\sim\D^m} \left[ \Risk_{\U}(\A(S_\y);\D_{\y})|E_{S\in\S} \right]]\geq 2\epsilon
\end{equation*}
Which implies that there exists $\y_0$ such that $\E_{S\sim\D^m} \left[ \Risk_{\U}(\A(S_{\y_0});\D_{\y_0}) | E_{S\in\S} \right]\geq 2\epsilon$. Since $\P_{\D}[i \in I]\leq 8\epsilon$, the robust risk $\Risk_{\U}(\A(S_{\y_0}));\D_{\y}) \leq 8\epsilon$. Then, by law of total expectation, we have
\begin{align*}
    2\epsilon &\leq \E_{S\sim \D^m} \left[ \Risk_{\U}(\A(S_{\y_0});\D_{\y_0}) | E_{S\in\S}\right]\\ &\leq 8\epsilon \P_{S\sim\D^m} \left[ \Risk_{\U}(\A(S_{\y_0});\D_{\y}) \geq \epsilon|E_{S\in\S} \right] + \epsilon (1-\P_{S\sim\D^m} \left[ \Risk_{\U}(\A(S_{\y_0});\D_{\y_0}) \geq \epsilon \right|E_{S\in\S}])
\end{align*}

By collecting terms, we obtain that $\P_{S\sim\D^m} \left[ \Risk_{\U}(\A(S_{\y_0});\PXY) \geq \epsilon \right | E_{S\in\S}] \geq 1/7$. Then, by law of total probability, the probability over all sequences (not necessarily in $\S$) can be lower bounded,
\begin{equation*}
    \P_{S\sim\D^m} \left[ \Risk_{\U}(\A(S_{\y_0});\D_{\y_0}) \geq \epsilon \right] \geq \P[E_{S\in\S}]\P_{S\sim\D^m}\left[ \Risk_{\U}(\A(S_{\y_0});\D_{\y_0}) \geq \epsilon | E_{S\in\S}\right] \geq \frac{1}{7}\P[E_{S\in\S}] 
\end{equation*}

By a standard application of Chernoff bounds, for $\epsilon=\frac{\d-1}{32m}$ and $\delta\leq 1/100$, we get that $\P[E_{S\in\S}] \geq 7\delta$ and by the above this concludes that $\P_{S\sim\D^m} \left[ \Risk_{\U}(\A(S_{\y_0});\D_{\y_0}) \geq \epsilon \right]\geq \delta$. This establishes that
\begin{equation*}
    \SC_{\RE}(\epsilon,\delta;\H,\U) \geq \Omega\!\left( \frac{\d}{\epsilon} \right)
\end{equation*}

To finish the proof, we need to show that 
\begin{equation*}
    \SC_{\RE}(\epsilon,\delta;\H,\U) \geq \Omega\!\left( \frac{1}{\eps} \log\!\left( \frac{1}{\delta} \right) \right)
\end{equation*}

For this just consider a distribution $P_1$ with mass $1-\epsilon$ on $(z_1^{+},+1)$ and mass $\epsilon$ on $(z_2^{+},+1)$, and another distribution $P_2$ with mass $1-\epsilon$ on $(z_1^{+},+1)$ and mass $\epsilon$ on $(z_2^{-},-1)$. If $m\leq (1/2\epsilon)\ln(1/\delta)$, with probability at least $\delta$, we will only observe $m$ samples of $(z_1^{+},+1)$, and thus learning rule $\A$ will make a mistake on $x_2$ (which is in $\U(z_2^{+})\cap\U(z_2^{-})$) with probability at least $1/2$, therefore having error at least $\epsilon/2$. By combining both parts, we arrive at the theorem statement.
%Consider now any learning algorithm that observes $m$ samples that include at most $(\d-1)/2$ points from . Then for each $i$ such that $z_{i}^{y_{i}}$ is not in the training set, 
%the learner's predictor makes a prediction for $x_{i}$ that has 
%$1/2$ posterior probability of being different from $y_{i}$, 
%which would mean it is not robustly correct on $z_{i}^{y_{i}}$.
%Thus, its expected robust loss is at least $2\epsilon$, 
%which (by Markov's inequality 
%%% P(R > 1/8)=1-P(1-R \ge 7/8) \ge 1-(8/7)(1-ER) \ge 1-(8/7)(3/4)=1/7>1/8
%and the usual probabilistic method argument) implies 
%that there exists a choice of $\PXY$ satisfying the realizable 
%case, such that the algorithm has at least $1/8$ probability 
%of having robust loss at least $1/8$.

For the agnostic case, we briefly describe the construction. The remainder of the proof more or less follows a standard argument, for instance see \citet[~Chapter 5]{anthony:99}. Let $\d = \dim_{\U}(\H)$, and fix $x_{1},\ldots,x_{\d}$ 
a sequence $\U$-robustly shattered by $\H$, and let $z_{1}^{+},z_{1}^{-},\ldots,z_{\d}^{+},z_{\d}^{-} \in \X$ be as in definition \ref{def:robust_shatter}; in particular, note that any $y,y^{\prime}$ and any $i \neq j$ necessarily have $z_{i}^{y} \neq z_{j}^{y^{\prime}}$. For $b \in \{0,1\}^\d$, define distribution $\D_b$ as follows, for $i\in[\d]$:
\begin{itemize}
    \item If $b_i$ = 0, then set $\P_{\D_b}((z_i^{+},+1))=(1-\alpha)/(2\d)$ and $\P_{\D_b}((z_i^{-},-1))=(1+\alpha)/(2\d)$.
    \item If $b_i$ = 1, then set $\P_{\D_b}((z_i^{+},+1))=(1+\alpha)/(2\d)$ and $\P_{\D_b}((z_i^{-},-1))=(1-\alpha)/(2\d)$.
\end{itemize}
where $) < \alpha < 1$ is appropriately chosen based on $\eps$ and $\delta$.
\end{proof}

\end{document}